\definecolor{lb}{RGB}{155,206,227}
\definecolor{db}{RGB}{31,120,180}
\definecolor{lg}{RGB}{178,223,138}
\definecolor{dg}{RGB}{51,160,44}
\definecolor{darkblue}{rgb}{0,0,0.75}
\definecolor{caddendum}{RGB}{0,0,0}
\definecolor{cerratum}{RGB}{0,0,0}
\definecolor{c-lss-1}{RGB}{28,144,153}
\definecolor{c-lss}{RGB}{255,0,0}
\definecolor{c-dpp}{RGB}{127,121,73}
\definecolor{c-dpp-1}{RGB}{127,121,73}
\definecolor{c-km}{RGB}{117,107,177}
\definecolor{lloyd-km}{RGB}{241,163,64}
\newtheorem{definition}{Definition}
\newtheorem{theorem}{Theorem}
\newtheorem*{theorem*}{Theorem}
\DeclareMathOperator*{\argmin}{arg\,min}
\newcommand{\brackets}[1]{\left( {#1} \right)}
\newcommand{\Brackets}[1]{\Big( {#1} \Big)}
\newcommand{\cbrackets}[1]{\left\{ {#1} \right\} }
\newcommand{\norm}[1]{\left\Vert {#1} \right\Vert}
\newcommand{\absolute}[1]{\left\vert {#1} \right\vert}
\newcommand{\ip}[2]{\left\langle {#1}, {#2} \right\rangle}
\newcommand{\ceq}[1]{(\ref{#1})}
\newcommand{\acro}[1]{\textsc{\MakeLowercase{#1}}}
\pgfplotsset{compat=newest}
\pgfplotsset{
	tick label style={font=\small},
	label style={font=\small},
	legend style={font=\small},
	every axis/.append style={
		thick,
		tick style={semithick, black},
		axis line style={-},
		axis x line =bottom,
		axis y line =left
	}
}
\newcommand{\krein}{Kre\u{\i}n }
\newlength\figureheight
\newlength\figurewidth
\icmltitlerunning{Scalable Learning in Reproducing Kernel Krein Spaces}
\begin{document}

\twocolumn[
\icmltitle{Scalable Learning in Reproducing Kernel \krein Spaces}

\begin{icmlauthorlist}
\icmlauthor{Dino Oglic}{kcl}
\icmlauthor{Thomas G\"artner}{nott}
\end{icmlauthorlist}

\icmlaffiliation{kcl}{Department of Informatics, King's College London, UK}
\icmlaffiliation{nott}{School of Computer Science, University of Nottingham, UK}

\icmlcorrespondingauthor{Dino Oglic}{dino.oglic@uni-bonn.de}

\vskip 0.3in
]

\printAffiliationsAndNotice{}

\begin{abstract}
We provide the first mathematically complete derivation of the Nystr\"om method for low-rank approximation of indefinite kernels and propose an efficient method for finding an approximate eigendecomposition of such kernel matrices. Building on this result, we devise highly scalable methods for learning in reproducing kernel \krein spaces. The devised approaches provide a principled and theoretically well-founded means to tackle large scale learning problems with indefinite kernels. The main motivation for our work comes from problems with structured representations (e.g., graphs, strings, time-series), where it is relatively easy to devise a pairwise (dis)similarity function based on intuition and/or knowledge of domain experts. Such functions are typically not positive definite and it is often well beyond the expertise of practitioners to verify this condition. The effectiveness of the devised approaches is evaluated empirically using indefinite kernels defined on structured and vectorial data representations.
\end{abstract}

\section{Introduction}
\label{sec:intro}

In learning problems with structured data it is relatively easy to devise a pairwise similarity/dissimilarity function based on intuition/knowledge of domain experts. Such functions are typically not positive definite and it is often the case that verifying this condition is well beyond the expertise of practitioners. The learning problems with indefinite similarity/dissimilarity functions are typically modeled via \krein spaces~\citep[e.g., see][]{Ong04,Loosli16,OglicG18icml}, which are vector spaces with an indefinite bilinear form~\citep{Azizov1981,Iokhvidov82}. The computational and space complexities of these approaches are similar to those of the standard kernel methods that work with positive definite kernels~\citep{Scholkopf01}. The Nystr\"om method~\citep{Nystrom30,Smola00,Williams01} is an effective approach for low-rank approximation of positive definite kernels that can scale kernel methods to problems with millions of instances~\citep{Scholkopf01}. We provide the first mathematically complete derivation that extends the Nystr\"om method to low-rank approximation of indefinite kernels and propose an efficient method for finding an approximate eigendecomposition of such kernel matrices. To tackle the computational issues arising in large scale problems with indefinite kernels, we also devise several novel Nystr\"om-based low-rank approaches tailored for scalable learning in reproducing kernel \krein spaces.

We start by showing that the Nystr\"om method can be used for low-rank approximations of indefinite kernel matrices and provide means for finding their approximate eigendecompositions (Section~\ref{subsec:nystroem}). We then devise two landmark sampling strategies based on state-of-the-art techniques~\citep{Gittens16,OglicG17icml} used in Nystr\"om approximations of positive definite kernels (Section~\ref{subsec:landmarks}). Having described means for finding low-rank factorizations of indefinite kernel matrices, we formulate low-rank variants of two least squares methods~\citep{tikhonov,Rifkin02,OglicG18icml} for learning in reproducing kernel \krein spaces (Section~\ref{subsec:krein-regression}). We also derive a novel low-rank variant of the support vector machine for scalable learning in reproducing kernel \krein spaces (Section~\ref{subsec:krein-classification}), inspired by~\citet{OglicG18icml}. Having introduced means for scalable learning in reproducing kernel \krein spaces, we evaluate the effectiveness of these approaches and the Nystr\"om low-rank approximations on datasets from standard machine learning repositories (Section~\ref{sec:experiments}). The empirical results demonstrate the effectiveness of the proposed approaches in: \emph{i}) classification tasks and \emph{ii}) problems of  finding a low-rank approximation of an indefinite kernel matrix. The experiments are performed using $15$ representative datasets and a variety of indefinite kernels. The paper concludes with a discussion where we contrast ours and other relevant approaches (Section~\ref{sec:discussion}).

\section{Scalable Learning in \krein Spaces}
\label{sec:large-scale}

In this section, we first provide a novel derivation of the Nystr\"om method that allows us to extend the approach to low-rank approximation of indefinite kernel matrices. Building on this result, we then provide means to scale \krein kernel methods to datasets with millions of instances/pairwise (dis)similarities. More specifically, we devise low-rank variants of kernel ridge regression and support vector machines in reproducing kernel \krein spaces, as well as a low-rank variant of the variance constrained ridge regression proposed in~\citet{OglicG18icml}. As the effectiveness of low-rank approximations based on the Nystr\"om method critically depends on the selected landmarks, we also adapt two state-of-the-art landmark sampling strategies proposed for the approximation of positive definite kernels. In addition to this, we make a theoretical contribution (Proposition~\ref{prop:krein-svm}) relating the \krein support vector machines~\citep{Loosli16} to previous work on learning with the flip-spectrum kernel matrices~\citep{GraepelHBO98,Chen09}.

\subsection{Reproducing Kernel \krein Spaces}
\label{subsec:krein-kernels}

Let $\mathcal{K}$ be a vector space defined on the scalar field $\mathbb{R}$. A bilinear form on $\mathcal{K}$ is a function $\ip{\cdot}{\cdot}_{\mathcal{K}} \colon \mathcal{K} \times \mathcal{K} \rightarrow \mathbb{R}$ such that, for all $f, g, h \in \mathcal{K}$ and scalars $\alpha, \beta \in \mathbb{R}$, it holds: \emph{i})
$ \ip{\alpha f + \beta g}{h}_{\mathcal{K}} =\alpha \ip{f}{h}_{\mathcal{K}}  + \beta \ip{g}{h}_{\mathcal{K}}$ and \emph{ii}) $\ip{f}{\alpha g + \beta h}_{\mathcal{K}} =\alpha \ip{f}{g}_{\mathcal{K}} + \beta \ip{f}{h}_{\mathcal{K}} $. The bilinear form is called non-degenerate if
\begin{align*}
	\brackets{\forall f \in  \mathcal{K}} \colon \Brackets{ \brackets{\forall g \in \mathcal{K}}\  \ip{f}{g}_{\mathcal{K}}=0 } \  \Longrightarrow \ f=0.
\end{align*}
The bilinear form $\ip{\cdot}{\cdot}_{\mathcal{K}}$ is symmetric if, for all $f, g \in \mathcal{K}$, we have $ \ip{f}{g}_{\mathcal{K}}=\ip{g}{f}_{\mathcal{K}}$. The form is called indefinite if there exists $f, g \in \mathcal{K}$ such that $\ip{f}{f}_{\mathcal{K}} > 0$ and $\ip{g}{g}_{\mathcal{K}} < 0$. On the other hand, if $\ip{f}{f}_{\mathcal{K}}\geq 0$ for all $f \in \mathcal{K}$, then the form is called positive. A non-degenerate, symmetric, and positive bilinear form on $\mathcal{K}$ is called inner product. Any two elements $f, g \in \mathcal{K}$ that satisfy $\ip{f}{g}_{\mathcal{K}}=0$ are $\ip{\cdot}{\cdot}_{\mathcal{K}}$-orthogonal. Similarly, any two subspaces $\mathcal{K}_1, \mathcal{K}_2 \subset \mathcal{K}$ that satisfy $\ip{f_1}{f_2}_{\mathcal{K}}=0$ for all $f_1\in \mathcal{K}_1$ and $f_2 \in \mathcal{K}_2$ are called $\ip{\cdot}{\cdot}_{\mathcal{K}}$-orthogonal. Having reviewed bilinear forms, we are now ready to introduce the notion of a \krein space.

\begin{definition}~\citep{bognar74,Azizov1981} The vector space $\mathcal{K}$ with a bilinear form $\ip{\cdot}{\cdot}_{\mathcal{K}}$ is called \krein space if it admits a decomposition into a direct sum $\mathcal{K}=\mathcal{H}_{+} \oplus \mathcal{H}_{-}$ of $\ip{\cdot}{\cdot}_{\mathcal{K}}$-orthogonal Hilbert spaces $\mathcal{H}_{\pm}$ such that the bilinear form can be written as 
\begin{align*}
\ip{f}{g}_{\mathcal{K}}=\ip{f_{+}}{g_{+}}_{\mathcal{H}_{+}} - \ip{f_{-}}{g_{-}}_{\mathcal{H}_{-}} \ ,
\end{align*}
where $\mathcal{H}_{\pm}$ are endowed with inner products $\ip{\cdot}{\cdot}_{\mathcal{H}_{\pm}}$, $f=f_{+} \oplus f_{-}$, $g=g_{+} \oplus g_{-}$, and $f_{\pm}, g_{\pm} \in \mathcal{H}_{\pm}$. 
\end{definition}

For a fixed decomposition $\mathcal{K}=\mathcal{H}_{+} \oplus \mathcal{H}_{-}$, the Hilbert space $\mathcal{H}_{\mathcal{K}}=\mathcal{H}_{+} \oplus \mathcal{H}_{-}$ endowed with inner product  
\begin{align*}
\ip{f}{g}_{\mathcal{H}_{\mathcal{K}}} = \ip{f_{+}}{g_{+}}_{\mathcal{H}_+} + \ip{f_{-}}{g_{-}}_{\mathcal{H}_-} \quad (f_{\pm}, g_{\pm} \in \mathcal{H}_{\pm})
\end{align*}
can be associated with $\mathcal{K}$. For a \krein space $\mathcal{K}$, the decomposition $\mathcal{K}=\mathcal{H}_{+} \oplus \mathcal{H}_{-}$ is not necessarily unique. Thus, a \krein space can, in general, be associated with infinitely many Hilbert spaces. However, for any such Hilbert space $\mathcal{H}_{\mathcal{K}}$ the topology introduced on $\mathcal{K}$ via the norm $\norm{f}_{\mathcal{H}_{\mathcal{K}}}=\sqrt{\ip{f}{f}_{\mathcal{H}_{\mathcal{K}}}}$ is independent of the decomposition and the associated Hilbert space. More specifically, all norms $\norm{\cdot}_{\mathcal{H}_{\mathcal{K}}}$ generated by different decompositions of $\mathcal{K}$ into direct sums of Hilbert spaces are topologically equivalent~\citep{Langer1962}. The topology on $\mathcal{K}$ defined by the norm of an associated Hilbert space is called the strong topology. Having reviewed basic properties of \krein spaces, we are now ready to introduce a notion of reproducing kernel \krein space. For that, let $\mathcal{X}$ be an instance space and denote with $\mathbb{R}^{\mathcal{X}}$ the set of functions from $\mathcal{X}$ to $\mathbb{R}$. For a fixed element $x \in \mathcal{X}$, the map $\mathbb{T}_x \colon \mathbb{R}^{\mathcal{X}} \rightarrow \mathbb{R}$ that assigns a real number to each function $f \in \mathcal{R}^{\mathcal{X}}$ is called the evaluation functional at $x$, i.e., $\mathbb{T}_x\brackets{f}=f\brackets{x}$ for all $f \in \mathbb{R}^{\mathcal{X}}$. 

\begin{definition}~\citep{alpay1991,Ong04} A \krein space $\brackets{\mathcal{K}, \ip{\cdot}{\cdot}_{\mathcal{K}}}$ is a reproducing kernel \krein space if $\mathcal{K}\subset \mathbb{R}^{\mathcal{X}}$ and the evaluation functional is continuous on $\mathcal{K}$ with respect to the strong topology.
\end{definition}

The following theorem provides a characterization of reproducing kernel \krein spaces.

\begin{theorem}~\citep{Schwartz1964,alpay1991} Let $k \colon \mathcal{X} \times \mathcal{X} \rightarrow \mathbb{R}$ be a real-valued symmetric function. Then, there is an associated reproducing kernel \krein space if and only if $k = k_{+} - k_{-}$, where $k_{+}$ and $k_{-}$ are positive definite kernels. When the function $k$ admits such a decomposition, one can choose $k_{+}$ and $k_{-}$ such that the corresponding reproducing kernel Hilbert spaces are disjoint.
\end{theorem}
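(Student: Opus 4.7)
The plan is to prove the two directions separately and then extract the disjointness refinement from the sufficiency construction.

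For the sufficiency direction, suppose $k = k_+ - k_-$ with $k_\pm$ positive definite, and let $\mathcal{H}_\pm$ be the associated reproducing kernel Hilbert spaces. I would first form the external direct sum $\widetilde{\mathcal{K}} = \mathcal{H}_+ \oplus \mathcal{H}_-$ as an abstract vector space and endow it with the bilinear form $\ip{(f_+, f_-)}{(g_+, g_-)}_{\widetilde{\mathcal{K}}} = \ip{f_+}{g_+}_{\mathcal{H}_+} - \ip{f_-}{g_-}_{\mathcal{H}_-}$, verifying the Krein axioms directly from the definition. To realize $\widetilde{\mathcal{K}}$ as a function space, I would use the map $\pi \colon \widetilde{\mathcal{K}} \to \mathbb{R}^{\mathcal{X}}$ sending $(f_+, f_-) \mapsto f_+ - f_-$ and check that the candidate reproducing element $k_x := (k_+(\cdot,x), k_-(\cdot,x))$ satisfies $\ip{(f_+,f_-)}{k_x}_{\widetilde{\mathcal{K}}} = f_+(x) - f_-(x)$, so that evaluation is continuous in the strong topology of any associated Hilbert space. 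Finally I would identify $\mathcal{K}$ with the image of $\pi$ (or equivalently the form-orthogonal complement of $\ker \pi$) to obtain a bona fide reproducing kernel Krein space with kernel $k$.

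For the necessity direction, suppose $(\mathcal{K}, \ip{\cdot}{\cdot}_{\mathcal{K}})$ is a reproducing kernel Krein space with a fixed decomposition $\mathcal{K} = \mathcal{H}_+ \oplus \mathcal{H}_-$ and associated Hilbert space $\mathcal{H}_{\mathcal{K}}$. By hypothesis evaluation is continuous in the strong topology, so the Riesz representation theorem applied in $\mathcal{H}_{\mathcal{K}}$ produces $e_x \in \mathcal{H}_{\mathcal{K}}$ with $f(x) = \ip{f}{e_x}_{\mathcal{H}_{\mathcal{K}}}$. Splitting $e_x = e_x^+ \oplus e_x^-$ along the decomposition and setting $k_x = e_x^+ \oplus (-e_x^-)$ yields a Krein reproducing element: $\ip{f}{k_x}_{\mathcal{K}} = \ip{f_+}{e_x^+}_{\mathcal{H}_+} + \ip{f_-}{e_x^-}_{\mathcal{H}_-} = f(x)$. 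I would then define $k_\pm(x,y) = \ip{e_y^\pm}{e_x^\pm}_{\mathcal{H}_\pm}$; these are positive definite as Gram functions in Hilbert spaces, and a direct calculation of $\ip{k_y}{k_x}_{\mathcal{K}}$ shows $k(x,y) = k_+(x,y) - k_-(x,y)$. Each $k_\pm$ is in fact the reproducing kernel of the closed linear span of $\{e_x^\pm\}_{x \in \mathcal{X}}$ inside $\mathcal{H}_\pm$.

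The disjointness addendum I would handle by refining an arbitrary decomposition $k = k_+ - k_-$: let $\mathcal{V} = \mathcal{H}_+ \cap \mathcal{H}_- \subset \mathbb{R}^{\mathcal{X}}$ and consider, inside $\mathcal{H}_+ \oplus \mathcal{H}_-$, the diagonal subspace $\mathcal{D} = \{(v, v) : v \in \mathcal{V}\}$, which is isotropic for the Krein form. Quotienting $\widetilde{\mathcal{K}}$ by the form-orthogonal hull of $\mathcal{D}$ (equivalently, replacing $(f_+, f_-)$ by a canonical representative chosen to minimize overlap with $\mathcal{V}$) gives Hilbert components $\mathcal{H}'_\pm$ with $\mathcal{H}'_+ \cap \mathcal{H}'_- = \{0\}$ and still $k = k'_+ - k'_-$. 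I expect the main obstacle to lie precisely here: one has to check that $\mathcal{V}$ is a closed subspace of both $\mathcal{H}_\pm$ despite the two inherited norms potentially differing, and that the quotient construction preserves positive definiteness of the kernels $k'_\pm$; the cleanest route is to first establish that the Krein structure itself depends only on $k$ (not on the particular decomposition) via uniqueness of the reproducing kernel, and then to read off the disjoint decomposition from a canonical choice inside $\mathcal{K}$ itself.
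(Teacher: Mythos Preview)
The paper does not prove this theorem. It is quoted with attribution to \citet{Schwartz1964} and \citet{alpay1991} as background in Section~\ref{subsec:krein-kernels}, and no argument for it appears anywhere in the text or in Appendix~\ref{app:proofs} (which contains only the proofs of the two propositions on the flip-spectrum equivalence). So there is nothing in the paper to compare your proposal against.

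As a standalone sketch your outline is broadly standard and the two main directions are handled correctly: Riesz in the associated Hilbert space $\mathcal{H}_{\mathcal{K}}$ followed by splitting along the fundamental decomposition is exactly how one extracts $k_\pm$ in the necessity direction, and the external direct sum with the difference map $\pi$ is the usual construction for sufficiency. The part you flag as the obstacle is indeed where the real work lies. Your claim that the diagonal $\mathcal{D}=\{(v,v):v\in\mathcal{V}\}$ is isotropic for the \krein form is not automatic: $\ip{(v,v)}{(v,v)}_{\widetilde{\mathcal{K}}}=\norm{v}_+^2-\norm{v}_-^2$ need not vanish unless you first establish that the two Hilbert norms agree on $\mathcal{V}=\mathcal{H}_+\cap\mathcal{H}_-$, which is a nontrivial consequence of both norms dominating pointwise evaluation together with a closed-graph argument. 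Likewise, ``quotienting by the form-orthogonal hull of $\mathcal{D}$'' does not by itself yield a \krein space when the subspace is degenerate; the clean route in the literature is rather to pass to the quotient Hilbert space $\mathcal{H}_{\mathcal{K}}/\ker\pi$ (using that $\ker\pi$ is closed in the strong topology) and then show that the fundamental symmetry descends. If you intend to write this out in full, consult Alpay's treatment directly, as the paper defers entirely to that source.
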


\subsection{Nystr\"om Method for Reproducing \krein Kernels}
\label{subsec:nystroem}

Let $\mathcal{X}$ be an instance space and $X=\cbrackets{x_1, \dots, x_n}$ an independent sample from a Borel probability measure defined on $\mathcal{X}$. For a positive definite kernel $h$ and a set of landmarks $Z=\cbrackets{z_1, \dots, z_m} \subset \mathcal{X}$, the Nystr\"om method~\citep{Nystrom30,Smola00,Williams01} first projects the evaluation functionals $h\brackets{x_i, \cdot}$ onto $\mathrm{span}\brackets{\cbrackets{h\brackets{z_1, \cdot}, \dots, h\brackets{z_m, \cdot}}}$ and then approximates the kernel matrix $H$ with entries $\cbrackets{H_{ij}=h\brackets{x_i, x_j}}_{i,j=1}^n$ by inner products between the projections of the corresponding evaluation functionals. The projections of the evaluation functionals $h\brackets{x_i, \cdot}$ are linear combinations of the landmarks and these coefficients are given by the following convex optimization problem
\begin{align}
\label{eq:nystroem-pd}
\alpha^{*} = \argmin_{\alpha \in \mathbb{R}^{m \times n}}\  \sum_{i=1}^n \norm{ h\brackets{x_i, \cdot} - \sum_{j=1}^m \alpha_{j,i} h\brackets{z_j, \cdot} }_{\mathcal{H}}^2  .
\end{align} 
While this approach works for positive definite kernels, it cannot be directly applied to reproducing \krein kernels. In particular, let $\mathcal{K}$ be a reproducing kernel \krein space with an indefinite kernel $k\colon \mathcal{X} \times \mathcal{X} \rightarrow \mathbb{R}$. The reproducing \krein kernel $k$ is defined by an indefinite bilinear form $\ip{\cdot}{\cdot}_{\mathcal{K}}$ which does not induce a norm on $\mathcal{K}$ and for all $a, b\in \mathcal{K}$ the value of $\ip{a-b}{a-b}_{\mathcal{K}}$ does not capture the distance. More specifically, as the bilinear form is indefinite then there exists an element $c \in \mathcal{K}$ such that $\ip{c}{c}_{\mathcal{K}}<0$.

For an evaluation functional $k\brackets{x, \cdot} \in \mathcal{K}$ and a linear subspace $\mathcal{L}_Z \subset \mathcal{K}$ spanned by a set of evaluation functionals $\cbrackets{k\brackets{z_1, \cdot}, \dots, k\brackets{z_m, \cdot}}$, we define $\tilde{k}\brackets{x, \cdot}$ to be the orthogonal projection of $k\brackets{x, \cdot}$ onto the subspace $\mathcal{L}_Z$ if the evaluation functional admits a decomposition~\citep{Azizov1981,Iokhvidov82}
\begin{align*}
k\brackets{x, \cdot} = \tilde{k}\brackets{x, \cdot} + k^{\bot}\brackets{x, \cdot} \ ,
\end{align*}
where $\tilde{k}\brackets{x, \cdot} = \sum_{i=1}^{m} \alpha_{i, x} k\brackets{z_i, \cdot}$ with $\alpha_{x}\in \mathbb{R}^m$, and $\ip{k^{\bot}\brackets{x, \cdot}}{\mathcal{L}_Z}_{\mathcal{K}} = 0$. For a landmark $z \in Z$, the inner product between the corresponding evaluation functional $k\brackets{z,\cdot}$ and $k\brackets{x,\cdot}$ then gives
\begin{align}
\label{eq:krein-projection}
k\brackets{x, z} = \ip{k\brackets{x,\cdot}}{k\brackets{z,\cdot}}_{\mathcal{K}} = \sum_{i=1}^m \alpha_{i, x} \tilde{k}\brackets{z_i, z} \ .
\end{align}
Denote with $K_{Z \times Z}$ the block in the kernel matrix $K$ corresponding to landmarks $Z$ and let $k_x=\mathrm{vec}\brackets{k\brackets{x,z_1}, \dots, k\brackets{x,z_m}}$. From Eq.~\ceq{eq:krein-projection} it then follows that $k_x = K_{Z \times Z} \alpha_x$. Thus, in \krein spaces an orthogonal projection exists if the matrix $K_{Z \times Z}$ is non-singular. If this condition is satisfied, then the projection is given by
\begin{align*}
\tilde{k}\brackets{x, \cdot} = \sum_{i=1}^m \alpha_{i, x}^{*} k\brackets{z_i, \cdot} \ \text{with} \  \alpha_x^{*} = K_{Z \times Z}^{-1} k_x \in \mathbb{R}^m \ .
\end{align*}
Having computed the projection of a point onto the span of the landmarks in a \krein space, we now proceed to define the Nystr\"om approximation of the corresponding indefinite kernel matrix. In this, we follow the approach for positive definite kernels~\citep{Scholkopf01,Smola00} and approximate the \krein kernel matrix $K$ using the bilinear form on the span of the landmarks. More specifically, we have that
\begin{align*}
\begin{aligned}
& \tilde{k}\brackets{x_i, x_j} = &&  \ip{\sum_{p=1}^m \alpha_{p,i}^{*} k\brackets{z_p, \cdot}}{\sum_{q=1}^m \alpha_{q,j}^{*} k\brackets{z_q, \cdot}}_{\mathcal{K}} =& \\
&  &&  k_{x_i}^{\top} K_{Z \times Z}^{-1} k_{x_j} \ . &
\end{aligned}
\end{align*}
Thus, the low-rank approximation of the \krein kernel matrix $K$ is given by
\begin{align}
\label{eq:flip-nystroem}
\begin{aligned}
& \tilde{K}_{X \mid Z} = && K_{X \times Z} K_{Z \times Z}^{-1} K_{Z \times Z} K_{Z \times Z}^{-1} K_{Z \times X} = & \\
& && K_{X \times Z} K_{Z \times Z}^{-1} K_{Z \times X} \ . &
\end{aligned}
\end{align}

This approach for low-rank approximation of \krein kernel matrices also provides a direct way for an out-of-sample extension in the non-transductive setting. In particular, for an out-of-sample instance $x \in \mathcal{X}$ we have that if holds
\begin{align*}
\tilde{k}_{x \times X}=\mathrm{vec}(\tilde{k}\brackets{x, x_1}, \dots, \tilde{k}\brackets{x, x_n}) =  K_{X \times Z} K_{Z \times Z}^{-1} k_x  .
\end{align*}

In applications to estimation problems (see Sections~\ref{subsec:krein-regression} and~\ref{subsec:krein-classification}) an approximate low-rank \emph{eigendecomposition} of the kernel matrix, also known as the \emph{one-shot} variant of the Nystr\"om method~\citep[][]{Fowlkes04}, is sometimes preferred over the plain Nystr\"om approximation described above. To derive such a factorization, we first observe that the low-rank approximation of the indefinite kernel matrix can be written as 
\begin{align*}
\tilde{K}_{X \mid Z} =L S L^{\top} \ \text{ with } \ L = K_{X \times Z} U_{Z \times Z} \absolute{D_{Z \times Z}}^{-\frac{1}{2}}   \ ,
\end{align*}
and where $K_{Z \times Z} = U_{Z \times Z} D_{Z \times Z} U_{Z \times Z}^{\top}$ is an eigendecomposition of the block in the kernel matrix corresponding to landmarks and $S=\mathrm{sign}\brackets{D_{Z\times Z}}$. Substituting a singular value decomposition of $L=A \Sigma B^{\top}$ into the latter equation (with orthonormal matrices $A \in \mathbb{R}^{n \times m}$ and $B \in \mathbb{R}^{m \times m}$), we deduce the following low-rank factorization
\begin{align*}
\tilde{K}_{X \mid Z} = A \ \Sigma B^{\top} S B \Sigma \ A^{\top} = A M A^{\top} \ ,
\end{align*}
where $M=\Sigma B^{\top} S B \Sigma$ is a symmetric matrix with an eigendecomposition $M=P\Lambda P^{\top}$. Hence,
\begin{align*}
\tilde{K}_{X \mid Z} = \brackets{AP} \Lambda \brackets{AP}^{\top} \quad \text{with} \quad \brackets{AP}^{\top} AP = \mathbb{I}_m \ .
\end{align*}
As the matrix $\tilde{U}=AP \in \mathbb{R}^{n \times m}$ contains $m$ orthonormal column vectors and $\Lambda$ is a diagonal matrix, we have then derived an approximate low-rank eigendecomposition of $K$.

\subsection{Landmark Selection for the Nystr\"om Method with Indefinite Kernels}
\label{subsec:landmarks}

The effectiveness of a low-rank approximation based on the Nystr\"om method depends crucially on the choice of landmarks and an optimal choice is a difficult discrete optimization problem. The landmark selection for the Nystr\"om method has been studied extensively in the context of approximation of positive definite matrices~\citep[e.g., see][]{Drineas05,Kumar12,Gittens16,Alaoui15,OglicG17icml}. We follow this line of research and present two landmark selection strategies for indefinite \krein kernels inspired by the state-of-the-art sampling schemes: (approximate) kernel $K$-means$++$ sampling~\citep{OglicG17icml} and statistical leverage scores~\citep{Alaoui15,Drineas06,Drineas05}. 

In both cases, we propose to first sample a small number of instances uniformly at random and create a sketch matrix $\tilde{K}=\tilde{U}\Lambda \tilde{U}^{\top}$ using the procedure described in Section~\ref{subsec:nystroem}. Then, using this sketch matrix we propose to approximate: \emph{i}) statistical leverage scores for all instances, and/or \emph{ii}) squared distances between instances in the feature space of the factorization $\tilde{H}=\tilde{L}\tilde{L}^{\top}$ with $\tilde{L}=\tilde{U}\absolute{\Lambda}^{\nicefrac{1}{2}}$. An approximate leverage score assigned to the $i$-th instance is given as the squared norm of the $i$-th row in the matrix $\tilde{U}$, that is $\ell \brackets{x_i}=\Vert \tilde{U} ( i ) \Vert^2$ with $1 \leq i \leq n$. As the two matrices $\tilde{H}$ and $\tilde{K}$ have identical eigenvectors, the approximate leverage scores obtained using the positive definite matrix $\tilde{H}$ capture the informative part of the eigenspace of the indefinite matrix $\tilde{K}$. This follows from the Eckart--Young--Mirsky theorem~\citep{eckart1936,mirsky1960} which implies that the optimal low-rank approximation of an indefinite kernel matrix is given by a set of landmarks spanning the same subspace as that spanned by the eigenvectors corresponding to the top eigenvalues, sorted in descending order with respect to their absolute values. The landmark selection strategy based on the approximate leverage scores then works by taking a set of independent samples from
\begin{align*}
p_{\ell} \brackets{x} = \nicefrac{\ell \brackets{x}}{\sum_{i=1}^n \ell \brackets{x_i}} \ .
\end{align*}
For approximate kernel $K$-means$++$ landmark selection, we propose to perform $K$-means$++$ clustering~\citep{Arthur07} in the feature space defined by the factorization matrix $\tilde{L}$, that is each instance is represented with a row from this matrix. The approach works by first sampling an instance uniformly at random and setting it as the first landmark (i.e., the first cluster centroid). Following this, the next landmark/centroid is selected by sampling an instance with the probability proportional to its clustering contribution. More formally, assuming that the landmarks $\cbrackets{z_1, z_2, \dots, z_s}$ have already been selected the $(s+1)$-st one is selected by taking a sample from the distribution 
\begin{align*}
p_{s+1}^{++}\brackets{x}=\nicefrac{\min_{1\leq i \leq s} \norm{x - z_i}^2}{\sum_{i=1}^n \min_{1\leq j \leq s} \norm{x_i - z_j}^2 } \ .
\end{align*}

\subsection{Scaling Least Squares Methods for Indefinite Kernels using the Nystr\"om Method}
\label{subsec:krein-regression}

We present two regularized risk minimization problems with squared error loss function for scalable learning in reproducing kernel \krein spaces. Our choice of the regularization term is motivated by the considerations in~\citet{OglicG18icml}, where the authors regularize with respect to a decomposition of the \krein kernel into a direct sum of Hilbert spaces. We start with a \krein least squares method (\acro{\krein lsm}) which is a variant of kernel ridge regression, i.e.,
\begin{align*}
\begin{aligned}
 f^{*} = \argmin_{f \in \mathcal{K}} \quad & \frac{1}{n}\sum_{i=1}^n \Brackets{f\brackets{x_i} - y_i}^2 + & \\
& \lambda_{+} \norm{f_{+}}_{+}^2 + \lambda_{-} \norm{f_{-}}_{-}^2 \ , &
\end{aligned}
\end{align*}
where $f=f_{+} \oplus f_{-} \in \mathcal{K}$, $\mathcal{K}=\mathcal{H}_+ \oplus \mathcal{H}_-$ with disjoint $\mathcal{H}_{\pm}$, $f_{\pm} \in \mathcal{H}_{\pm}$, and hyperparameters $\lambda_{\pm} \in \mathbb{R}^{+}$. This is a convex optimization problem for which the representer theorem holds~\citep[][Appendix A]{OglicG18icml} and the optimal solution $f^{*}=\sum_{i=1}^n \alpha_i^{*} k\brackets{x_i, \cdot}$ with $\alpha^{*} \in \mathbb{R}^n$. Applying the reproducing property of the \krein kernel and setting the gradient of the objective to zero, we derive
\begin{align*}
\alpha^{*} = \Brackets{H + n\Lambda_{\pm}}^{-1}Py \ ,
\end{align*}
where $K=UDU^{\top}$, $S=\mathrm{sign}\brackets{D}$, $H=UDSU^{\top}$, $P=USU^{\top}$, and $\Lambda_{\pm}=\lambda_{+}S_{+} + \lambda_{-}\absolute{S_{-}}$ with $S_{\pm}=\nicefrac{\brackets{S \pm \mathbb{I}}}{2}$. 

An important difference compared to stabilization approaches~\citep[e.g., see][]{Loosli16} is that we are solving a regularized risk minimization problem for which a globally optimal solution can be found in polynomial time. Another difference is that stabilization approaches perform subspace descent while we are optimizing jointly over decomposition components of a Krein space. In the special case with $\lambda_+=\lambda_-$, the approach outputs a hypothesis equivalent to that of a stabilization approach along the lines of~\citet{Loosli16}. In particular, the matrix $H$ is called the flip-spectrum transformation of $K$ and $k_{x \times X}^{\top} P$ is the corresponding out-of-sample transformation. Learning with the flip-spectrum transformation of an indefinite kernel matrix was first considered in~\citet{GraepelHBO98} and the corresponding out-of-sample transformation was first proposed in~\citet{Chen09}. The following proposition (a proof is provided in Appendix~\ref{app:proofs}) establishes the equivalence between the least squares method with the flip-spectrum matrix in place of an indefinite kernel matrix and \krein kernel ridge regression regularized with a single hyperparameter.

\begin{restatable}{proposition}{propKreinRegression}
	If the \krein kernel ridge regression problem is regularized via the norm $\norm{\cdot}_{\mathcal{H}_{\mathcal{K}}}$ with $\lambda=\lambda_+=\lambda_-$, then the optimal hypothesis is equivalent to that obtained with kernel ridge regression and the flip-spectrum matrix in place of an indefinite \krein kernel matrix.
\end{restatable}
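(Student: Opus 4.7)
The plan is to start from the closed-form solution already derived in Section~\ref{subsec:krein-regression} for the \krein least squares method and specialise it to $\lambda_+=\lambda_-=\lambda$. Recall that $S$ is diagonal with $\pm 1$ entries, so $S_+=(S+\mathbb{I})/2$ and $\absolute{S_-}=(\mathbb{I}-S)/2$ are complementary diagonal $0/1$ projectors with $S_+ + \absolute{S_-}=\mathbb{I}$. Consequently the regulariser simplifies to $\Lambda_\pm=\lambda\,\mathbb{I}$, and the \krein \acro{lsm} coefficient vector collapses to
\begin{align*}
\alpha^{*} = \brackets{H + n\lambda \mathbb{I}}^{-1} P y \ .
\end{align*}

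Next I would write out the prediction of \acro{\krein lsm} at a point $x$ as $f^{*}(x)=k_{x\times X}^{\top}\alpha^{*}=k_{x\times X}^{\top}(H+n\lambda\mathbb{I})^{-1}Py$, and compare it with the prediction of ordinary kernel ridge regression using the flip-spectrum matrix $H$ together with the \citet{Chen09} out-of-sample transformation $k_{x\times X}^{\top}P$, namely $\hat{f}(x)=k_{x\times X}^{\top}P(H+n\lambda\mathbb{I})^{-1}y$. Equivalence of the two hypotheses on an arbitrary test point therefore reduces to verifying the matrix identity
\begin{align*}
(H+n\lambda\mathbb{I})^{-1}P = P\,(H+n\lambda\mathbb{I})^{-1} \ .
\end{align*}

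The key observation driving this identity is that $H=U\absolute{D}U^{\top}$ and $P=USU^{\top}$ are simultaneously diagonalised by the same orthogonal matrix $U$ from the spectral decomposition $K=UDU^{\top}$. Since $\absolute{D}$ and $S$ are both diagonal they commute, so $HP=PH$, and hence $P$ commutes with any analytic function of $H$, in particular with $(H+n\lambda\mathbb{I})^{-1}$. Substituting this back establishes $f^{*}(x)=\hat{f}(x)$ for every $x\in\mathcal{X}$, which is the claimed equivalence of hypotheses. I also need to briefly remark that the representer theorem cited from~\citet{OglicG18icml} justifies restricting to the span $\sum_i\alpha_i k(x_i,\cdot)$, so the comparison at the level of coefficient vectors is the full picture.

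I do not anticipate a genuine obstacle here; the argument is essentially a two-line algebraic manipulation once $\Lambda_\pm$ is simplified. The only mildly delicate point is bookkeeping around the out-of-sample side: one has to be careful that the transformation $k_{x\times X}^{\top}P$ on the flip-spectrum side is applied on the correct side of the inverse, and this is where the commutativity of $H$ and $P$ (both diagonalised by $U$) is doing the actual work.
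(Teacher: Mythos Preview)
Your proposal is correct and essentially mirrors the paper's own proof: both specialise $\Lambda_\pm$ to $\lambda\mathbb{I}$ and then exploit the fact that $H$, $P$, and $(H+n\lambda\mathbb{I})^{-1}$ are all simultaneously diagonalised by $U$, so the relevant factors commute. The paper unrolls this explicitly by substituting the eigendecomposition and first showing $K\alpha^{*}=H(H+n\lambda\mathbb{I})^{-1}y$ on the training points before stating the out-of-sample form $k_x^{\top}P\alpha_H^{*}$, whereas you jump straight to the out-of-sample identity via the abstract commutativity $P(H+n\lambda\mathbb{I})^{-1}=(H+n\lambda\mathbb{I})^{-1}P$; these are the same argument presented at two levels of abstraction.
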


Having established this, we now proceed to formulate a \krein regression problem with a low-rank approximation $\tilde{K}_{X \mid Z}$ in place of the indefinite kernel matrix $K$. More formally, after substituting the low-rank approximation into \krein kernel ridge regression problem we transform it by
\begin{align*}
\begin{aligned}
& z= \absolute{D_{Z \times Z}}^{-\nicefrac{1}{2}}U_{Z \times Z}^{\top}K_{Z \times X}\alpha = L_{X \mid Z}^{\top}\alpha \ , &\\
& \Phi = K_{X\times Z} U_{Z\times Z} \absolute{D_{Z \times Z}}^{-\nicefrac{1}{2}} S_{Z\times Z} = L_{X \mid Z} S_{Z\times Z} \ , & \\ 
& \tilde{K}_{X \mid Z}\alpha = L_{X \mid Z} S_{Z\times Z} z = \Phi z \ \text{ and }\ \alpha^{\top} H_{\pm} \alpha = z_{\pm}^{\top}z_{\pm} \ ,&
\end{aligned}
\end{align*}
where $H_{\pm}=L_{X \mid Z} \absolute{S_{Z\times Z, \pm}} L_{X \mid Z}^{\top}$, $z_{\pm}=\absolute{S_{Z\times Z, \pm}} z$, and $S_{Z\times Z,\pm}=\nicefrac{\brackets{S_{Z\times Z} \pm \mathbb{I}}}{2}$.
Hence, we can write a low-rank variant of the \krein kernel ridge regression problem as
\begin{align*}
z^{*} = \argmin_{z \in \mathbb{R}^m} \ \norm{\Phi z - y}^2_2 + n\lambda_{+} \norm{z_{+}}^2_2 + n\lambda_{-} \norm{z_{-}}^2_2 \ .
\end{align*}
The problem is convex in $z$ and the optimal solution satisfies
\begin{align*}
z^{*} = \brackets{\Phi^{\top} \Phi + n \Lambda_{\pm}}^{-1} \Phi^{\top} y \ .
\end{align*}
An out-of-sample extension for this learning problem is
\begin{align*}
\tilde{f}^{*}\brackets{x} = k_x^{\top} U_{Z\times Z} \absolute{D_{Z \times Z}}^{-\nicefrac{1}{2}} S_{Z\times Z} z^{*} \ .
\end{align*}

Having introduced a low-rank variant of \krein kernel ridge regression, we proceed to define a scalable variance constrained least squares method (\acro{\krein vc-lsm}). This risk minimization problem is given by~\citep{OglicG18icml}
\begin{align*}
\begin{aligned}
& \min_{f \in \mathcal{K}} && \frac{1}{n} \sum_{i=1}^n \brackets{f\brackets{x_i} - y_i}^2 + \lambda_+ \norm{f_{+}}_{+}^2 + \lambda_{-} \norm{f_{-}}_{-}^2 & \\
& s. t. && \sum_{i=1}^n f\brackets{x_i}^2 = r^2 \ , &
\end{aligned}
\end{align*}
with hyperparameters $r \in \mathbb{R}$ and $\lambda_{\pm} \in \mathbb{R}^{+}$. To simplify our derivations~\citep[just as in][]{OglicG18icml}, we have without loss of generality assumed that the kernel matrix $K$ is centered. Then, the hard constraint fixes the variance of the predictor over training instances. Similar to \krein kernel ridge regression, we can transform this problem into
\begin{align*}
\begin{aligned}
&  z^{*} = && \argmin_{z \in \mathbb{R}^m} \ n\lambda_{+} \norm{z_{+}}^2 + n\lambda_{-} \norm{z_{-}}^2 - 2z^{\top} \Phi^{\top} y & \\
& s.t. && z^{\top} \Phi^{\top} \Phi z = r^2  \ . &
\end{aligned}
\end{align*}
Now, performing a singular value decomposition of $\Phi=A \Delta B^{\top}$ and taking $\gamma=\Delta B^{\top} z$ we obtain
\begin{align*}
\begin{aligned}
& \gamma^{*}= && \argmin_{\gamma \in \mathbb{R}^m} \ n \gamma^{\top} \Delta^{-1}B^{\top} \Lambda_{\pm} B \Delta^{-1} \gamma  -2 (A^{\top}y)^{\top} \gamma & \\
& s.t. &&  \gamma^{\top}\gamma = r^2 \ .& 
\end{aligned}
\end{align*}
A globally optimal solution to this non-convex problem can be computed by following the procedures outlined in~\citet{Gander1989} and~\citet{OglicG18icml}. The cost of computing the solution is $\mathcal{O}\brackets{m^3}$ and the cost for the low-rank transformation of the problem is $\mathcal{O}\brackets{m^3 + m^2n}$. An out-of-sample extension can also be obtained by following the derivation for \krein kernel ridge regression.

\subsection{Scaling Support Vector Machines for Indefinite Kernels using the Nystr\"om Method}
\label{subsec:krein-classification}

In this section, we propose a low-rank support vector machine for scalable classification with indefinite kernels. Our regularization term is again motivated by the considerations in~\citet{OglicG18icml} and that is one of the two main differences compared to \krein support vector machine proposed in~\citet{Loosli16}. The latter approach outputs a hypothesis which can equivalently be obtained using the standard support vector machine with the flip-spectrum kernel matrix combined with the corresponding out-of-sample transformation (introduced in Section~\ref{subsec:krein-regression}). The second difference of our approach compared to~\citet{Loosli16} stems from the fact that in low-rank formulations one optimizes the primal of the problem, defined with the squared hinge loss instead of the plain hinge loss. In particular, the latter loss function is not differentiable and that can complicate the hyperparameter optimization. We note that the identical choice of loss function was used in other works for primal-based optimization of support vector machines~\citep[e.g., see][]{Mangasarian01,Keerthi05}.

We propose the following optimization problem as the \krein squared hinge support vector machine (\acro{\krein sh-svm})
\begin{align*}
\begin{aligned}
 f^{*} = \argmin_{f \in \mathcal{K}} \quad & \frac{1}{n}\sum_{i=1}^n \max \cbrackets{1 - y_if\brackets{x_i}, 0}^2 + & \\
& \lambda_{+} \norm{f_+}_{+}^2 + \lambda_{-} \norm{f_{-}}_{-}^2  \ . &
\end{aligned}
\end{align*}
Similar to Section~\ref{subsec:krein-regression}, the representer theorem holds for this problem and applying the reproducing property of the \krein kernel we can transform it to a matrix form. If we again substitute a low-rank approximation $\tilde{K}_{X \mid Z}$ in place of the \krein kernel matrix $K$, we observe that
\begin{align*}
\begin{aligned}
& f\brackets{x_i} = && \tilde{k}_{x_i}^{\top} \alpha = k_{x_i}^{\top} K_{Z \times Z}^{-1}K_{Z \times X	} \alpha = & \\
& && k_{x_i}^{\top} U_{Z \times Z} \absolute{D_{Z \times Z}}^{-\nicefrac{1}{2}} S_{Z \times Z} z = \Phi_i z \ , &
\end{aligned}
\end{align*}
where $\Phi_i$ denotes the $i$-th row in the matrix $\Phi$. The low-rank variant of the approach can then be written as
\begin{align*}
\begin{aligned}
& z^{*} =  \argmin_{z \in \mathbb{R}^m} && \sum_{i=1}^n \max \cbrackets{1 - y_i \Phi_i z, 0}^2 + & \\
& && n\lambda_{+} \norm{z_+}^2_2 + n\lambda_{-} \norm{z_{-}}^2_2 \ .&
\end{aligned}
\end{align*}

The derivation of the solution follows that for the standard primal-based training of support vector machines with the only difference being that the diagonal matrix $\Lambda_{\pm}$ is used instead of the scalar hyperparameter controlling the hypothesis complexity~\citep[e.g., see][]{Mangasarian01,Keerthi05}. To automatically tune the hyperparameters, one can follow the procedure described in~\citet{ChapelleVBM02} and use implicit derivation to compute the gradient of the optimal solution with respect to the hyperparameters.

\begin{figure*}[t]
	\centering
	\input{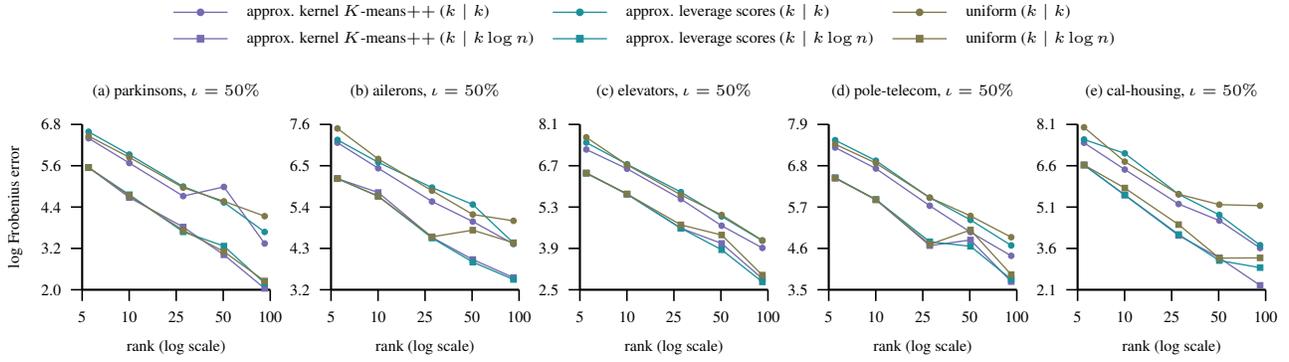}
	\vspace*{-0.8ex}
	\caption{The figure shows the reduction in the approximation error for an indefinite kernel matrix defined as the difference between two Gaussian kernels, which comes as a result of the increase in the approximation rank. In the figure legend, we use $\brackets{k \mid l}$ to express the fact that a rank $k$ approximation of the kernel matrix is computed using a set of $l$ landmarks.}
	\label{fig:delta-gauss-rank-vs-error}\vspace*{1.6ex}
\end{figure*}

\begin{figure*}[!htb]
	\centering
	\begin{tikzpicture}[font=\tiny]
\hypersetup{hidelinks}
\begin{groupplot}[group style={group size=5 by 1, horizontal sep=2.3em}]

\nextgroupplot[
title={(a) parkinsons, $\iota=50\%$},
height=2.2cm,
width=2.5cm,
scale only axis,
axis line style={-},
tick label style={font=\tiny},
label style={font=\tiny},
legend style={font=\tiny, at={(3.0, 1.4)}, anchor=south, draw=none, legend cell align=left, legend columns=3, column sep=.5em},
legend image post style={scale=1.2},
xmin=0.5, xmax=4.5,
ymin=2, ymax=6.8,
axis on top,
xtick={0.5,  1.5,  2.5,  3.5,  4.5},
xticklabels={0.5,  1.5,  2.5,  3.5,  4.5},
ytick={ 2.0 ,  3.2,  4.4,  5.6,  6.8},
yticklabels={ 2.0 ,  3.2,  4.4,  5.6,  6.8},
tick align=outside,
xlabel={log time (Nystr\"om)},
ylabel={log Frobenius error}
]
     
\addplot[c-km, only marks, mark=o, mark size=2pt]
coordinates {
	( 0.70295901 ,  6.39728207 )
	( 0.78681822 ,  5.67832405 )
	( 0.80589484 ,  4.71918142 )
	(   0.92237502 ,  4.98599623 )
	(1.1850326 , 3.34220906)
};
\addlegendentry{approx. kernel $K$-means$++$ ($k \mid k$)};
        
\addplot[c-lss-1, only marks, mark=o, mark size=2pt]
coordinates {
	( 0.78623862 ,  6.58661312 )
	( 0.77509552 ,  5.92494051 )
	(  0.83931027 ,  4.99667569 )
	(  0.91141035 ,  4.53500358 )
	(1.10946096 , 3.67729154)
};
\addlegendentry{approx. leverage scores ($k \mid k$)};
       
\addplot[c-dpp-1, only marks, mark=o, mark size=2pt]
coordinates {
	( 0.66125996 ,  6.46085002 )
	(  0.6281714 ,  5.84395709 )
	(  0.74398442 ,  4.96470334 )
	( 0.90783116  ,  4.56802462 )
	(  0.97567741,  4.13635335)
};
\addlegendentry{uniform ($k \mid k$)};
 
\addplot[c-km, only marks, mark=square, mark size=2pt]
coordinates {
	( 2.37061955 ,  5.55140318 )
	( 2.13755006 ,  4.67755246 )
	( 2.28798736 ,  3.82063305 )
	(  3.19250434 ,  3.01414714 )
	( 4.2114715, 2.03976865)
};
\addlegendentry{approx. kernel $K$-means$++$ ($k \mid k \log n$)};

\addplot[c-lss-1, only marks, mark=square, mark size=2pt]
coordinates {
	(2.45425857 ,  5.54143307 )
	(2.09268057  ,  4.75882497 )
	(  2.19443765 ,  3.68181224 )
	( 2.82431615 ,  3.26775237 )
	( 4.00683094, 2.19055765)
};
\addlegendentry{approx. leverage scores ($k \mid k \log n$)};

\addplot[c-dpp-1, only marks, mark=square, mark size=2pt]
coordinates {
	( 2.21118729 , 5.55047165 )
	( 1.95605602 ,  4.72641063 )
	( 2.14945467  ,  3.7265337 )
	( 2.8197973 ,  3.10680721 )
	(  3.82641001,  2.25395925)
};
\addlegendentry{uniform ($k \mid k \log n$)};

\nextgroupplot[
title={(b) ailerons, $\iota=50\%$},
height=2.2cm,
width=2.5cm,
scale only axis,
axis line style={-},
tick label style={font=\tiny},
label style={font=\tiny},
legend style={font=\tiny},
xmin=0.5, xmax=3.7,
ymin=3.2, ymax=7.6,
axis on top,
xtick={0.5,  1.3,  2.1,  2.9,  3.7},
xticklabels={0.5,  1.3,  2.1,  2.9,  3.7},
ytick={  3.2,  4.3,  5.4,  6.5,  7.6},
yticklabels={  3.2,  4.3,  5.4,  6.5,  7.6},
tick align=outside,
xlabel={log time (Nystr\"om)}
]

\addplot[c-km, only marks, mark=o, mark size=2pt]
coordinates {
	( 0.647908 ,  7.11378931 )
	(  0.61025722 ,  6.4339291 )
	(  0.71207827  ,  5.54797731 )
	( 0.8749465  ,  5.01528408 )
	( 1.14649539, 4.41645038)
};
       
\addplot[c-lss-1, only marks, mark=o, mark size=2pt]
coordinates {
	( 0.64987622 , 7.18801607 )
	( 0.66238921,  6.5915836 )
	( 0.68829009 ,   5.91910503 )
	(  0.82997596 , 5.46938348  )
	(  1.07822199,  4.44316666)
};
      
\addplot[c-dpp-1, only marks, mark=o, mark size=2pt]
coordinates {
	( 0.54048494  ,  7.4908546 )
	(  0.56108442 ,  6.68153955 )
	( 0.64648183 ,  5.83546801 )
	(  0.80000101 , 5.20344945 )
	(  0.95637571,  5.03419612)
};

\addplot[c-km, only marks, mark=square, mark size=2pt]
coordinates {
	( 0.97645396 ,  6.15624439 )
	( 1.26130572 ,  5.78780006 )
	( 1.93771194 ,  4.59327317 )
	(  2.53982073 ,  4.00399021 )
	( 3.62054915, 3.52295417 )
};
 
\addplot[c-lss-1, only marks, mark=square, mark size=2pt]
coordinates {
	( 0.97684218 ,  6.1571411 )
	( 1.22456947 ,  5.68895159 )
	( 1.93034658 ,   4.57680589 )
	( 2.46962913 ,  3.93621284 )
	(  3.43269106,  3.47386001)
};

\addplot[c-dpp-1, only marks, mark=square, mark size=2pt]
coordinates {
	( 0.96866224 ,   6.15683105 )
	( 1.21161215 ,  5.68646381  )
	( 1.8395972 , 4.60863931  )
	(  2.41283341  ,  4.78545861 )
	( 3.42267903,  4.4519801)
};

\nextgroupplot[
title={(c) elevators, $\iota=50\%$},
height=2.2cm,
width=2.5cm,
scale only axis,
axis line style={-},
tick label style={font=\tiny},
label style={font=\tiny},
legend style={font=\tiny},
xmin=0.7, xmax=4.3,
ymin=2.5, ymax=8.1,
axis on top,
xtick={ 0.7,  1.6,  2.5,  3.4,  4.3},
xticklabels={ 0.7,  1.6,  2.5,  3.4,  4.3},
ytick={ 2.5,  3.9,  5.3,  6.7,  8.1},
yticklabels={ 2.5,  3.9,  5.3,  6.7,  8.1},
tick align=outside,
xlabel={log time (Nystr\"om)}
]
         
\addplot[c-km, only marks, mark=o, mark size=2pt]
coordinates {
	(  0.896512 ,  7.25030994 )
	( 0.90626842 , 6.59786301  )
	(  0.96109688 , 5.57140522  )
	( 1.09335068 ,  4.66854042 )
	( 1.29819244,  3.91586563)
};
          
\addplot[c-lss-1, only marks, mark=o, mark size=2pt]
coordinates {
	( 0.9812415 ,  7.48515933 )
	( 0.92271441 ,   6.75048263 )
	(  0.9721911 ,   5.80744404 )
	(  1.08036434 ,   4.98409929 )
	(1.24581877,  4.15892856 )
};
          
\addplot[c-dpp-1, only marks, mark=o, mark size=2pt]
coordinates {
	( 0.88409942 ,  7.66241029 )
	(  0.830805 ,  6.73263622 )
	( 0.91107773 ,  5.71123736 )
	( 1.01882338 ,  5.03208369 )
	( 1.14835335, 4.17408522)
};
   
\addplot[c-km, only marks, mark=square, mark size=2pt]
coordinates {
	(  1.45371515 ,   6.44852554 )
	( 2.03803031  ,  5.73791276 )
	( 2.19893469 ,   4.58224266 )
	(  2.71011482 ,  4.06922663 )
	( 4.19608148, 2.87691023)
};

\addplot[c-lss-1, only marks, mark=square, mark size=2pt]
coordinates {
	( 1.45797896 ,   6.4501965 )
	( 2.01569685 ,  5.74201547 )
	( 2.19784542  , 4.58590563)
	( 2.59538856 ,   3.85480962)
	(  3.96378265,   2.76641589)
};
 
\addplot[c-dpp-1, only marks, mark=square, mark size=2pt]
coordinates {
	( 1.42347457 , 6.45389789 )
	( 1.95740069 ,  5.74283199 )
	( 2.12808893, 4.69699246 )
	( 2.56511358 ,   4.35365756 )
	( 3.91995223, 2.99710113)
};

\nextgroupplot[
title={(d) pole-telecom, $\iota=50\%$},
height=2.2cm,
width=2.5cm,
scale only axis,
axis line style={-},
tick label style={font=\tiny},
label style={font=\tiny},
legend style={font=\tiny},
xmin=0.5, xmax=4.5,
ymin=3.5, ymax=7.9,
axis on top,
xtick={0.5,  1.5,  2.5,  3.5,  4.5},
xticklabels={0.5,  1.5,  2.5,  3.5,  4.5},
ytick={ 3.5,  4.6,  5.7,  6.8,  7.9},
yticklabels={ 3.5,  4.6,  5.7,  6.8,  7.9},
tick align=outside,
xlabel={log time (Nystr\"om)}
]
        
\addplot[c-km, only marks, mark=o, mark size=2pt]
coordinates {
	( 0.72295901  ,  7.28435882 )
	(  0.79681822  ,   6.72481232 )
	( 0.81589484 ,   5.73474204 )
	( 0.91237502 ,   5.03277957 )
	( 1.1850326,  4.40257979)
};
      
\addplot[c-lss-1, only marks, mark=o, mark size=2pt]
coordinates {
	( 0.78623862 ,  7.48014009)
	( 0.77509552 ,   6.93420283  )
	(  0.83931027 , 5.94862556)
	(  0.91141035  ,  5.35814911 )
	(  1.10946096,  4.67948334)
};
         
\addplot[c-dpp-1, only marks, mark=o, mark size=2pt]
coordinates {
	( 0.66125996 ,  7.37332184 )
	( 0.62817149 ,   6.86874282 )
	( 0.74398442 ,  5.9514898 )
	( 0.90783116  ,   5.46522194 )
	( 0.97567741,  4.89860851 )
};

\addplot[c-km, only marks, mark=square, mark size=2pt]
coordinates {
	( 1.37061955 ,   6.47272712)
	( 1.93755006   , 5.89745088  )
	( 2.18798736 ,  4.67475204  )
	(  3.19250434  ,  4.82363523 )
	(  4.21147152,  3.71738401 )
};

\addplot[c-lss-1, only marks, mark=square, mark size=2pt]
coordinates {
	( 1.45425857 ,   6.48112271 )
	( 2.09268057 ,  5.90003086 )
	( 2.19443765 ,   4.77770668 )
	( 2.82431615 ,   4.65590529 )
	(4.00683094,  3.77612286)
};
 
\addplot[c-dpp-1, only marks, mark=square, mark size=2pt]
coordinates {
	( 1.21118729 ,   6.47019571 )
	( 1.95605602 ,  5.89518732 )
	(  2.14945467 , 4.70788597  )
	( 2.8197973 ,  5.08993763 )
	( 3.82641001,  3.90310198 )
};

\nextgroupplot[
title={(e) cal-housing, $\iota=50\%$},
height=2.2cm,
width=2.5cm,
scale only axis,
axis line style={-},
tick label style={font=\tiny},
label style={font=\tiny},
legend style={font=\tiny},
xmin=1.5, xmax=4.3,
ymin=2.1, ymax=8.1,
axis on top,
xtick={ 1.5,  2.2,  2.9,  3.6,  4.3},
xticklabels={ 1.5,  2.2,  2.9,  3.6,  4.3},
ytick={2.1,  3.6,  5.1,  6.6,  8.1},
yticklabels={2.1,  3.6,  5.1,  6.6,  8.1},
tick align=outside,
xlabel={log time (Nystr\"om)}
]
          
\addplot[c-km, only marks, mark=o, mark size=2pt]
coordinates {
	( 1.73946311 ,  7.43285572 )
	(  1.71606227 ,   6.4629178 )
	( 2.10291659 ,  5.21070416 )
	(  2.07198797 ,   4.6097437 )
	( 2.475664 ,  3.61308444)
};
        
\addplot[c-lss-1, only marks, mark=o, mark size=2pt]
coordinates {
	( 1.7659463 , 7.55261577 )
	(  1.66797886 ,   7.047875 )
	( 2.10490894 ,   5.5656471  )
	( 2.01752357 ,  4.81665998 )
	(  2.47972229, 3.71701847)
};
          
\addplot[c-dpp-1, only marks, mark=o, mark size=2pt]
coordinates {
	( 1.6689394,   7.99382575  )
	(  1.6166923 ,   6.74845117  )
	( 2.0204552 ,  5.56681481  )
	( 2.00008158 ,   5.1885259  )
	(  2.46434668,  5.15130695)
};
 
\addplot[c-km, only marks, mark=square, mark size=2pt]
coordinates {
	( 2.16167435 ,   6.62591122 )
	( 1.93469978 ,  5.53442799  )
	(  2.49153872 ,  4.0695395 )
	(  3.30716426 ,   3.25743981 )
	( 3.82677752,   2.25943898)
};
 
\addplot[c-lss-1, only marks, mark=square, mark size=2pt]
coordinates {
	( 2.15167181 ,  6.62618657 )
	( 1.97921136 ,  5.54148874)
	( 2.50181436 ,  4.10480243 )
	( 3.26619109 ,    3.16004654 )
	(  3.71192685,  2.90245776)
};
 
\addplot[c-dpp-1, only marks, mark=square, mark size=2pt]
coordinates {
	( 1.5524493 ,  6.62773368 )
	( 1.89038937 , 5.79085792 )
	( 2.34816479 ,  4.46345036 )
	(  3.13301416 ,  3.24832749 )
	( 3.67940162,   3.25631369)
};

\end{groupplot}
\end{tikzpicture}
	\vspace*{-0.8ex}
	\caption{The figure shows the approximation errors of Nystr\"om low-rank approximations (with different approximation ranks) as a function of time required to compute these approximations. In the figure legend, we again use $\brackets{k \mid l}$ to express the fact that a rank $k$ approximation of the kernel matrix is computed using a set of $l$ landmarks.}
	\label{fig:delta-gauss-time-vs-error}
\end{figure*}
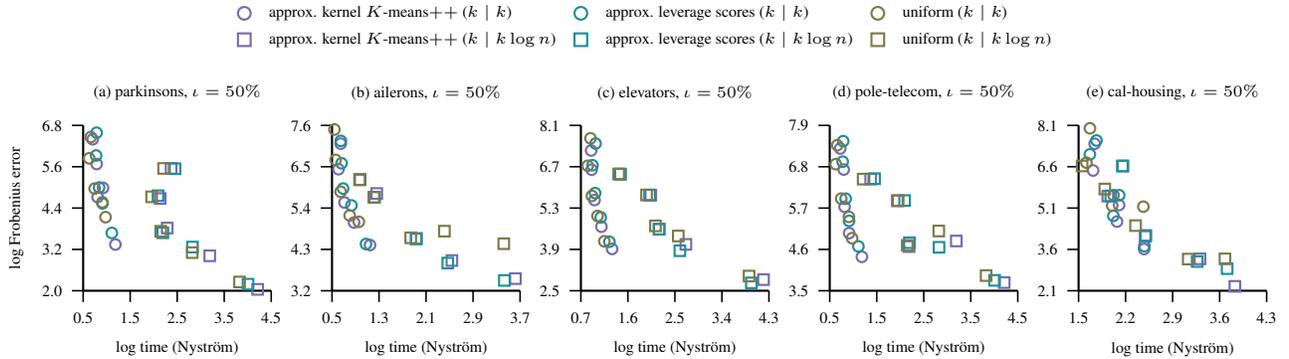

We conclude with a discussion of a potential shortcoming inherent to the \krein support vector machine~\citep{Loosli16}. As the following proposition shows (a proof can be found in Appendix~\ref{app:proofs}), that approach is equivalent to the standard support vector machine with the flip-spectrum matrix in place of an indefinite \krein kernel matrix~\citep{GraepelHBO98}, combined with the corresponding out-of-sample transformation~\citep{Chen09}. 

\begin{restatable}{proposition}{propKreinSVM}
	\label{prop:krein-svm}
	The \krein support vector machine~\citep{Loosli16} is equivalent to the standard support vector machine with the flip-spectrum matrix in place of an indefinite \krein kernel matrix~\citep{GraepelHBO98}, combined with the out-of-sample transformation from~\citet{Chen09}.
\end{restatable}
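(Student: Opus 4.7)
My plan is to invoke the representer theorem on both problems and exhibit an explicit bijection between their solutions. For Loosli's \krein SVM, the optimum is of the form $f = \sum_i \alpha_i k(x_i, \cdot)$; because the \krein inner product is indefinite, the regularised problem is naturally a saddle-point along the decomposition $\mathcal{K} = \mathcal{H}_+ \oplus \mathcal{H}_-$, minimised over the $\mathcal{H}_+$ component of $f$ and maximised over its $\mathcal{H}_-$ component. The standard SVM with the flip-spectrum matrix $H$ is, in contrast, a pure convex minimisation with a PSD kernel matrix, so the equivalence must account for this structural difference.

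\textbf{Reduction via the flip-spectrum.} Using $K = UDU^\top$, $S = \mathrm{sign}(D)$, $P = USU^\top$, and $H = U|D|U^\top$, one immediately verifies the algebraic identities $P^2 = \mathbb{I}$, $PK = KP = H$, and $PH = HP = K$. I would then derive the Wolfe dual of Loosli's saddle-point primal: the $\min/\max$ split aligns with the positive/negative eigenspaces of $K$, so the ``max over $\mathcal{H}_-$'' is converted, via the sign-flip on the negative-eigenvalue directions implemented by the involution $P$, into a minimisation. This turns the indefinite quadratic $\mu^\top Y K Y \mu$ in the dual into the PSD quadratic $\mu^\top Y H Y \mu$, and the \krein SVM dual collapses to the textbook SVM dual $\max_{\mu \in [0,C]^n,\, y^\top \mu = 0} \mathbf{1}^\top \mu - \tfrac{1}{2} \mu^\top Y H Y \mu$, whose unique solution $\mu^*$ coincides with the dual of the standard SVM with $H$ in the kernel slot. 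The \krein primal is then recovered as $\alpha^* = P Y \mu^*$, which guarantees that the in-sample predictions match: $K\alpha^* = K P Y \mu^* = H Y \mu^*$.

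\textbf{Out-of-sample and main obstacle.} Writing $\beta^* = Y \mu^*$ for the standard SVM primal coefficients in the $h$-expansion, the \krein SVM out-of-sample prediction reads $f(x) = k_x^\top \alpha^* = k_x^\top P \beta^* = (P k_x)^\top \beta^*$, which is precisely the out-of-sample transformation of~\citet{Chen09} applied to the kernel vector $k_x$ followed by the standard SVM prediction rule using $\beta^*$. The main technical obstacle, and the step that will occupy the appendix proof, is the dualisation: one has to show carefully that the saddle-point conditions of Loosli's primal along $\mathcal{H}_-$ are equivalent, under the involution $P$, to the ordinary minimisation KKT conditions of the standard SVM with $H$; the box constraints $\mu \in [0,C]^n$ and the affine constraint $y^\top \mu = 0$ play no role in this sign-flip and carry over unchanged. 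Once that is in place, the equivalence of primal coefficients ($\alpha^* = P Y \mu^*$) and of in-sample and out-of-sample predictions follows immediately from the four identities relating $P$, $K$, and $H$.
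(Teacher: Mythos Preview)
Your approach is correct but considerably more elaborate than the paper's. The paper's proof occupies four lines: it takes as given (from \citet{Loosli16} itself) that the \krein SVM solution has the form $\alpha^{*} = P\alpha_H^{*}$, where $\alpha_H^{*}$ is the standard SVM solution with $H$, and then simply verifies the two algebraic consequences $K\alpha^{*} = KP\alpha_H^{*} = H\alpha_H^{*}$ (in-sample agreement) and $f(x) = k_x^{\top}\alpha^{*} = k_x^{\top}P\alpha_H^{*}$ (out-of-sample agreement with Chen's transformation). No dualisation is performed; the proposition is treated as an observation about the \emph{form} of Loosli's published solution rather than as a statement requiring an independent optimisation argument.

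By contrast, you propose to re-derive the relationship $\alpha^{*} = P\,Y\mu^{*}$ from scratch by dualising Loosli's saddle-point primal and showing that the sign-flip implemented by $P$ converts the indefinite dual quadratic $\mu^{\top}YKY\mu$ into the PSD quadratic $\mu^{\top}YHY\mu$. This is exactly the derivation that \citet{Loosli16} carry out, so you are reproducing their argument rather than citing it. Your route is self-contained and makes explicit why the equivalence holds at the level of KKT/saddle-point conditions; the paper's route is shorter because it outsources that step to the cited reference and keeps only the matrix identities $KP = H$ and $k_x^{\top}P$ needed to match the flip-spectrum and Chen out-of-sample formulas. Both arrive at the same conclusion via the same underlying identities; what you flag as the ``main technical obstacle'' is precisely the part the paper regards as already established in \citet{Loosli16}.
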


While at the first glance this result seems incremental, it makes an important contribution towards understanding the \krein support vector machines~\citep{Loosli16}. In particular, the discussion of experiments in~\citet{Loosli16} differentiates between the \krein support vector machines and the flip-spectrum approach. This happens despite the illustration indicating that they produce identical hypotheses in synthetic experiments~\citep[e.g., see Figures $3$ and $4$ in][and the discussion therein]{Loosli16}.

\begin{figure*}[!htb]
	\centering
	\begin{tikzpicture}[font=\tiny]
\hypersetup{hidelinks}
\begin{groupplot}[group style={group size=5 by 1, horizontal sep=2.3em}]

\nextgroupplot[
title={(a) coilyork, $\iota=26\%$},
height=2.2cm,
width=2.5cm,
scale only axis,
axis line style={-},
tick label style={font=\tiny},
label style={font=\tiny},
legend style={font=\tiny, at={(3.0, 1.4)}, anchor=south, draw=none, legend cell align=left, legend columns=4, column sep=2em},
legend image post style={scale=1.2},
xmin=1.5, xmax=4.7,
ymin=30, ymax=50,
axis on top,
xtick={1.5,  2.3,  3.1,  3.9,  4.7},
xticklabels={5, 10,  25,  50,  100},
ytick={30.,  35.,  40.,  45.,  50},
yticklabels={ 30,  35,  40,  45,  50},
tick align=outside,
xlabel={rank (log scale)},
ylabel={classification error (\%)}
]

\addplot [c-km, solid, line width=0.5pt, mark=*, mark size=1pt, mark repeat={1}, mark phase=1]
coordinates {
( 1.60943791243 , 48.275862069 )
( 2.30258509299 , 44.6428571429 )
( 3.21887582487 , 43.0952380952 )
( 3.91202300543 , 39.7619047619 )
( 4.60517018599 , 32.7380952381 )
};
\addlegendentry{\acro{sf-lsm}};

\addplot [c-lss-1, solid, line width=0.5pt, mark=square*, mark size=1pt, mark repeat={1}, mark phase=1]
coordinates {
( 1.60943791243 , 37.5 )
( 2.30258509299 , 42.118226601 )
( 3.21887582487 , 40.2709359606 )
( 3.91202300543 , 36.1904761905 )
( 4.60517018599 , 33.3128078818 )
};
\addlegendentry{\krein \acro{lsm}};

\addplot [c-dpp-1, solid, line width=0.5pt, mark=diamond*, mark size=1.5pt, mark repeat={1}, mark phase=1]
coordinates {
( 1.60943791243 , 36.8226600985 )
( 2.30258509299 , 42.118226601 )
( 3.21887582487 , 40.2709359606 )
( 3.91202300543 , 37.2988505747 )
( 4.60517018599 , 32.183908046 )
};
\addlegendentry{\krein \acro{vc-lsm}};

\addplot [lloyd-km, solid, line width=0.5pt, mark=triangle*, mark size=1.5pt, mark repeat={1}, mark phase=1]
coordinates {
( 1.60943791243 , 46.5517241379 )
( 2.30258509299 , 47.3522167488 )
( 3.21887582487 , 43.1034482759 )
( 3.91202300543 , 40.3325123153 )
( 4.60517018599 , 38.6083743842 )
};
\addlegendentry{\krein \acro{sh-svm}};

\nextgroupplot[
title={(b) balls3D, $\iota=0.07\%$},
height=2.2cm,
width=2.5cm,
scale only axis,
axis line style={-},
tick label style={font=\tiny},
label style={font=\tiny},
legend style={font=\tiny, at={(3.0, 1.4)}, anchor=south, draw=none, legend cell align=left, legend columns=3, column sep=2em},
legend image post style={scale=1.2},
xmin=1.5, xmax=4.7,
ymin=0, ymax=40,
axis on top,
xtick={1.5,  2.3,  3.1,  3.9,  4.7},
xticklabels={5, 10,  25,  50,  100},
ytick={0.,  10.,  20.,  30.,  40.},
yticklabels={ 0,  10,  20,  30,  40},
tick align=outside,
xlabel={rank (log scale)}
]

\addplot [c-km, solid, line width=0.5pt, mark=*, mark size=1pt, mark repeat={1}, mark phase=1]
coordinates {
( 1.60943791243 , 27.5 )
( 2.30258509299 , 10.0 )
( 3.21887582487 , 5.0 )
( 3.91202300543 , 5.0 )
( 4.60517018599 , 0.0 )
};

\addplot [c-lss-1, solid, line width=0.5pt, mark=square*, mark size=1pt, mark repeat={1}, mark phase=1]
coordinates {
( 1.60943791243 , 40.0 )
( 2.30258509299 , 30.0 )
( 3.21887582487 , 7.5 )
( 3.91202300543 , 0.0 )
( 4.60517018599 , 0.0 )
};

\addplot [c-dpp-1, solid, line width=0.5pt, mark=diamond*, mark size=1.5pt, mark repeat={1}, mark phase=1]
coordinates {
( 1.60943791243 , 35.0 )
( 2.30258509299 , 35.0 )
( 3.21887582487 , 5.0 )
( 3.91202300543 , 0.0 )
( 4.60517018599 , 0.0 )
};

\addplot [lloyd-km, solid, line width=0.5pt, mark=triangle*, mark size=1.5pt, mark repeat={1}, mark phase=1]
coordinates {
( 1.6094379124341003 , 35.0 )
( 2.302585092994046 , 15.0 )
( 3.2188758248682006 , 0.0 )
( 3.912023005428146 , 0.0 )
( 4.605170185988092 , 0.0 )
};

\nextgroupplot[
title={(c) prodom, $\iota=99\%$},
height=2.2cm,
width=2.5cm,
scale only axis,
axis line style={-},
tick label style={font=\tiny},
label style={font=\tiny},
legend style={font=\tiny},
xmin=1.5, xmax=4.7,
ymin=0, ymax=20,
axis on top,
xtick={1.5,  2.3,  3.1,  3.9,  4.7},
xticklabels={5, 10,  25,  50,  100},
ytick={   0.,   5.,  10.,  15.,  20},
yticklabels={ 0,  5,  10,  15,  20},
tick align=outside,
xlabel={rank (log scale	)}
]

\addplot [c-km, solid, line width=0.5pt, mark=*, mark size=1.pt, mark repeat={1}, mark phase=1]
coordinates {
( 1.60943791243 , 19.422624595 )
( 2.30258509299 , 11.1324786325 )
( 3.21887582487 , 3.44827586207 )
( 3.91202300543 , 5.00965250965 )
( 4.60517018599 , 1.5384842971 )
};

\addplot [c-lss-1, solid, line width=0.5pt, mark=square*, mark size=1pt, mark repeat={1}, mark phase=1]
coordinates {
( 1.60943791243 , 14.4225506294 )
( 2.30258509299 , 15.162835249 )
( 3.21887582487 , 4.03035661656 )
( 3.91202300543 , 1.5384842971 )
( 4.60517018599 , 0.960813029779 )
};

\addplot [c-dpp-1, solid, line width=0.5pt, mark=diamond*, mark size=1.5pt, mark repeat={1}, mark phase=1]
coordinates {
( 1.60943791243 , 14.4225506294 )
( 2.30258509299 , 14.9712643678 )
( 3.21887582487 , 4.03035661656 )
( 3.91202300543 , 1.5384842971 )
( 4.60517018599 , 0.960813029779 )
};

\addplot [lloyd-km, solid, line width=0.5pt, mark=triangle*, mark size=1.5pt, mark repeat={1}, mark phase=1]
coordinates {
( 1.60943791243 , 19.7318007663 )
( 2.30258509299 , 13.0785958372 )
( 3.21887582487 , 2.88981288981 )
( 3.91202300543 , 0.960813029779 )
( 4.60517018599 , 0.0 )
};

\nextgroupplot[
title={(d) chicken10-120, $\iota=18\%$},
height=2.2cm,
width=2.5cm,
scale only axis,
axis line style={-},
tick label style={font=\tiny},
label style={font=\tiny},
legend style={font=\tiny},
xmin=1.5, xmax=4.7,
ymin=13, ymax=37,
axis on top,
xtick={1.5,  2.3,  3.1,  3.9,  4.7},
xticklabels={5, 10,  25,  50,  100},
ytick={  13.,  19.,  25.,  31.,  37.},
yticklabels={    13,  19,  25,  31,  37},
tick align=outside,
xlabel={rank (log scale	)}
]

\addplot [c-km, solid, line width=0.5pt, mark=*, mark size=1.pt, mark repeat={1}, mark phase=1]
coordinates {
( 1.60943791243 , 35.5555555556 )
( 2.30258509299 , 25.8333333333 )
( 3.21887582487 , 20.0 )
( 3.91202300543 , 14.4444444444 )
( 4.60517018599 , 14.595959596 )
};

\addplot [c-lss-1, solid, line width=0.5pt, mark=square*, mark size=1pt, mark repeat={1}, mark phase=1]
coordinates {
( 1.60943791243 , 24.696969697 )
( 2.30258509299 , 22.4747474747 )
( 3.21887582487 , 18.8888888889 )
( 3.91202300543 , 17.7777777778 )
( 4.60517018599 , 15.7070707071 )
};

\addplot [c-dpp-1, solid, line width=0.5pt, mark=diamond*, mark size=1.5pt, mark repeat={1}, mark phase=1]
coordinates {
( 1.60943791243 , 23.5858585859 )
( 2.30258509299 , 22.2222222222 )
( 3.21887582487 , 19.0909090909 )
( 3.91202300543 , 17.9797979798 )
( 4.60517018599 , 14.7727272727 )
};

\addplot [lloyd-km, solid, line width=0.5pt, mark=triangle*, mark size=1.5pt, mark repeat={1}, mark phase=1]
coordinates {
( 1.6094379124341003 , 27.7777777778 )
( 2.302585092994046 , 22.2222222222 )
( 3.2188758248682006 , 17.7777777778 )
( 3.912023005428146 , 17.9797979798 )
( 4.605170185988092 , 14.4444444444 )
};

\nextgroupplot[
title={(e) protein, $\iota=0.07\%$},
height=2.2cm,
width=2.5cm,
scale only axis,
axis line style={-},
tick label style={font=\tiny},
label style={font=\tiny},
legend style={font=\tiny},
xmin=1.5, xmax=4.7,
ymin=3, ymax=35,
axis on top,
xtick={1.5,  2.3,  3.1,  3.9,  4.7},
xticklabels={5, 10,  25,  50,  100},
ytick={  3.,  11.,  19.,  27.,  35},
yticklabels={  3,  11,  19,  27,  35},
tick align=outside,
xlabel={rank (log scale)}
]

\addplot [c-km, solid, line width=0.5pt, mark=*, mark size=1pt, mark repeat={1}, mark phase=1]
coordinates {
( 1.60943791243 , 33.3333333333 )
( 2.30258509299 , 14.2857142857 )
( 3.21887582487 , 14.2857142857 )
( 3.91202300543 , 6.92640692641 )
( 4.60517018599 , 4.65367965368 )
};

\addplot [c-lss-1, solid, line width=0.5pt, mark=square*, mark size=1pt, mark repeat={1}, mark phase=1]
coordinates {
( 1.60943791243 , 18.6147186147 )
( 2.30258509299 , 19.0476190476 )
( 3.21887582487 , 13.6363636364 )
( 3.91202300543 , 4.65367965368 )
( 4.60517018599 , 4.65367965368 )
};
      
\addplot [c-dpp-1, solid, line width=0.5pt, mark=diamond*, mark size=1.5pt, mark repeat={1}, mark phase=1]
coordinates {
( 1.60943791243 , 18.6147186147 )
( 2.30258509299 , 19.0476190476 )
( 3.21887582487 , 13.6363636364 )
( 3.91202300543 , 4.65367965368 )
( 4.60517018599 , 4.7619047619 )
};

\addplot [lloyd-km, solid, line width=0.5pt, mark=triangle*, mark size=1.5pt, mark repeat={1}, mark phase=1]
coordinates {
( 1.60943791243 , 7.14285714286 )
( 2.30258509299 , 7.03463203463 )
( 3.21887582487 , 9.52380952381 )
( 3.91202300543 , 4.7619047619 )
( 4.60517018599 , 4.7619047619 )
};

\end{groupplot}
\end{tikzpicture}
	\vspace*{-0.8ex}
	\caption{The figure shows the reduction in the classification error as the approximation rank of a Nystr\"om low-rank approximation increases. The reported error is the median classification error obtained using $10$-fold stratified cross-validation.}
	\label{fig:dissim-rank-vs-error}
\end{figure*}
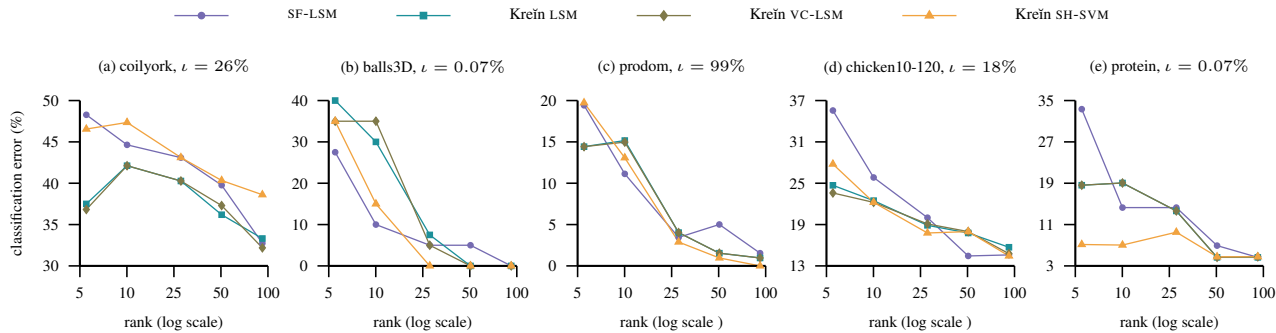

\section{Experiments}
\label{sec:experiments}

In this section, we report the results of experiments aimed at demonstrating the effectiveness of: \emph{i}) the Nystr\"om method in low-rank approximations of indefinite kernel matrices, and \emph{ii}) the described scalable \krein approaches in classification tasks with pairwise (dis)similarity matrices.

In the first set of experiments, we take several datasets from \acro{uci} and \acro{liacc} repositories and define kernel matrices on them using the same indefinite kernels as previous work~\citep[][Appendix D]{OglicG18icml}. We use 
\begin{align*}
0 \leq \iota=\nicefrac{\sum_{\cbrackets{i \colon \lambda_i < 0}} \absolute{\lambda_i} } {\sum_i \absolute{\lambda_i}} \leq 1
\end{align*}
to quantify the level of indefiniteness of a kernel matrix. Prior to computation of kernel matrices, all the data matrices were normalized to have mean zero and unit variance across features. Following this, we have applied the Nystr\"om method with landmark selection strategies presented in Section~\ref{subsec:landmarks} to derive approximations with different ranks. We measure the effectiveness of a low-rank approximation with its error in the Frobenius norm. To quantify the effectiveness of the approximate eigendecomposition of the kernel matrix (i.e., the one-shot Nystr\"om method) derived in Section~\ref{subsec:nystroem}, we have performed rank $k$ approximations using sets of $k\log n$ landmarks. Figures~\ref{fig:delta-gauss-rank-vs-error} and~\ref{fig:delta-gauss-time-vs-error} summarize the results obtained with an indefinite kernel defined by the difference between two Gaussian kernels. The reported error/time is the median error/time over $10$ repetitions of the experiment. Figure~\ref{fig:delta-gauss-rank-vs-error} indicates a sharp (approximately exponential) decay in the approximation error as the rank of the approximation increases. The devised approximate kernel $K$-means$++$ sampling strategy performs the best in terms of the accuracy in the experiments where rank $k$ approximations are generated using $k$ landmarks. The approximate leverage score strategy is quite competitive and in rank $k$ approximations generated using $k\log n$ landmarks it performs as good or even better than the approximate kernel $K$-means$++$ sampling scheme.~\citet{OglicG17icml} have evaluated these two state-of-the-art strategies on the low-rank approximation of positive definite kernels. In contrast to that work, we had to resort to an approximate kernel $K$-means$++$ sampling scheme because of the indefiniteness of the bilinear form defining a \krein space. As a result of this, we can observe the lack of a gap between the curves describing the two sampling strategies, compared to the results reported in~\citet{OglicG17icml} for positive definite kernels. Our hypothesis is that this is due to sub-optimal choices of landmarks that define sketch matrices. In our simulations, we have generated sketches by sampling the corresponding landmarks uniformly at random. In support of this hypothesis, rather large approximation errors for uniformly selected landmarks in approximation of other indefinite kernels can be observed (see Appendix~\ref{app:plots}). Figure~\ref{fig:delta-gauss-time-vs-error} reports the time required to generate a Nystr\"om low-rank approximation and indicates that the considered sampling strategies amount to only a small fraction of the total time required to generate the low-rank approximation.

\begin{table*}[!htb]
	\centering
	\fontsize{8}{10}\selectfont
\setlength{\tabcolsep}{3pt}
\begin{tabular}{l|l|rr|rr|rr|rr}

\hline

\hline

\multirow{2}{*}{\sc Dataset} & 
\multirow{2}{*}{\sc Dissimilarity type} &
\multicolumn{8}{c}{\sc Rank $100$ Approximation}  \\\cline{3-10}

& & 
\multicolumn{2}{c|}{\textsc{\krein vc-lsm}} &
\multicolumn{2}{c|}{\textsc{\krein lsm}} &
\multicolumn{2}{c|}{\textsc{\krein sh-svm}} &
\multicolumn{2}{c}{\textsc{sf-lsm}}  \\

\hline

\hline

coilyork & 
Graph matching &
$32.22 $ & $(\pm 7.89)$ &
$\textbf{31.21} $ & $(\pm 5.28)$ & 
$38.20 $ & $(\pm 7.20)$ &
$35.33 $ & $(\pm 10.09)$  \\

balls 3D & 
Shortest distance between balls &
$1.00 $ & $(\pm 2.00)$ &
$0.50 $ & $(\pm 1.50)$ &
$\textbf{0.00} $ & $(\pm 0.00)$ &
$0.50 $ & $(\pm 1.50)$ \\

prodom & 
Structural alignment of proteins &
$0.92 $ & $(\pm 0.46)$ &
$0.92 $ & $(\pm 0.46)$ & 
$\textbf{0.54} $ & $(\pm 0.47)$ &
$1.57 $ & $(\pm 0.58)$ \\

chicken10 & 
String edit distance &
$16.35 $ & $(\pm 4.31)$ &
$15.69 $ & $(\pm 4.97)$ & 
$16.82 $ & $(\pm 6.57)$ &
$\textbf{14.37} $ & $(\pm 4.02)$  \\

protein & 
Structural alignment of proteins &
$4.19 $ & $(\pm 2.47)$ &
$\textbf{3.72} $ & $(\pm 2.76)$ & 
$5.23 $ & $(\pm 2.89)$ & 
$5.15 $ & $(\pm 3.91)$ \\

zongker & 
Deformable template matching &
$17.70 $ & $(\pm 2.06)$ &
$17.75 $ & $(\pm 2.23)$ & 
$\textbf{15.30} $ & $(\pm 3.39)$ &
$17.05 $ & $(\pm 2.36)$\\

chicken25 & 
String edit distance &
$19.29 $ & $(\pm 4.64)$ &
$20.41 $ & $(\pm 4.09)$ & 
$25.77 $ & $(\pm 4.68)$ &
$\textbf{18.17} $ & $(\pm 6.67)$ \\

pdish57 & 
Hausdorff distance &
$3.40 $ & $(\pm 0.39)$ &
$3.40 $ & $(\pm 0.42)$ & 
$\textbf{2.73} $ & $(\pm 0.62)$ &
$3.03 $ & $(\pm 0.67)$  \\

pdism57 & 
Hausdorff distance &
$0.38 $ & $(\pm 0.26)$ &
$0.38 $ & $(\pm 0.26)$ & 
$\textbf{0.30} $ & $(\pm 0.29)$ &
$0.63 $ & $(\pm 0.42)$ \\

woody50 & 
Plant leaves' shape dissimilarity &
$30.84 $ & $(\pm 5.25)$ &
$30.47 $ & $(\pm 5.54)$ & 
$38.42 $ & $(\pm 7.13)$ &
$\textbf{26.41} $ & $(\pm 4.42)$ \\

\hline

\hline

\end{tabular}
	\caption{The table reports the results of our experiment on benchmark datasets for learning with indefinite kernels~\citep{Pekalska05}. The goal of the experiment is to evaluate the effectiveness of the state-of-the-art approaches for scalable learning in reproducing kernel \krein spaces on classification tasks with pairwise dissimilarity matrices. We measure the effectiveness of an approach using the average classification error obtained using $10$-fold stratified cross-validation (standard deviations are given in the brackets).}
	\label{tbl:baseline-results}
\end{table*}

In the second set of experiments, we evaluate the effectiveness of the proposed least square methods and the support vector machine on classification tasks\footnote{\tiny \url{http://prtools.org/disdatasets/index.html}} with pairwise dissimilarity matrices~\citep{Pekalska05,DuinPekalska09}. Following the instructions in~\citet{Pekalska09}, the dissimilarity matrices are converted to similarities by applying the transformation characteristic to multi-dimensional scaling~\citep[e.g., see the negative double-centering transformation in][]{coxcox00}. In each simulation, we perform $10$-fold stratified cross-validation and measure the effectiveness of an approach with the average/median percentage of misclassified examples. For multi-class problems, we only evaluate the effectiveness of a single binary \emph{one-vs-all} classifier~\citep[just as in][Appendix C]{OglicG18icml}. Figure~\ref{fig:dissim-rank-vs-error} shows the reduction in the classification error as the approximation rank increases. The reported error is the median error over $10$-folds. Here, \acro{sf-lsm} represents the baseline in which similarities are used as features and a linear ridge regression model is trained in that instance space~\citep{Chen09,alabdulmohsin14}. The figure indicates that the baseline is quite competitive, but overall the proposed low-rank variants perform very well across different datasets (additional plots are provided in Appendix~\ref{app:plots}). Tables~\ref{tbl:baseline-results} provides the detailed results over all the datasets. In Appendix~\ref{app:plots} (Table~\ref{tbl:complete-app}), we also compare the effectiveness of our low-rank approaches with respect to the relevant state-of-the-art methods which make no approximations and represent hypotheses via the span of kernel functions centered at training instances. The empirical results indicate a competitive performance of our low-rank approaches with only $100$ landmarks across all the datasets and a variety of indefinite kernel functions.

\section{Discussion}
\label{sec:discussion}

The Nystr\"om method has recently been used for approximate eigendecomposition and low-rank approximation of indefinite kernel matrices~\citep{gisbrechtschleif15,SchleifT15,SchleifGT16}. To circumvent the fact that the original derivations of the approach are restricted to positive definite Mercer kernels~\citep{Smola00,Williams01},~\citet{gisbrechtschleif15} provide a derivation of the approach based on approximations of integral eigenfunctions arising in an eigendecomposition of an indefinite kernel. In particular, the authors of that work introduce an integral operator defined with an indefinite kernel and its empirical/sample-based approximation which asymptotically converges to the original (indefinite) integral operator. Based on this result,~\citet{gisbrechtschleif15} provide a derivation of the Nystr\"om method for indefinite kernels that treats the approximate equalities arising in the approximations of integral eigenfunctions as if they were exact. While such an assumption might hold for some datasets it fails to hold in the general case and this fact makes their extension of the Nystr\"om method to indefinite kernels mathematically incomplete. Our derivation of the approach does not rely on such an assumption and, thus, provides a stronger result. Moreover, our proof is much simpler than the one in~\citet{gisbrechtschleif15} and provides a geometrical intuition for the approximation. 

In addition to this,~\citet{gisbrechtschleif15,SchleifT15,SchleifGT16} proposed a method for finding an approximate low-rank eigendecomposition of an indefinite kernel matrix (for the sake of completeness, we review this approach in Appendix~\ref{app:addendum}). From the perspective of the exact number of floating point operations (\acro{flop}s), the approach by~\citet{gisbrechtschleif15,SchleifT15,SchleifGT16} requires $7$ matrix-to-matrix multiplications (each with the cost of $m^2n$ \acro{flop}s) and $2$ eigendecompositions (each with the cost of $m^3$ \acro{flop}s). Thus, in total their approach requires $7m^2n + 2m^3$ \acro{flop}s to find an approximate low-rank eigendecomposition of an indefinite kernel matrix. In contrast to this, the approach proposed in Section~\ref{subsec:nystroem} comes with a much better runtime complexity and requires at most $3m^2n + 3m^3$ \acro{flop}s. To see a practical runtime benefit of our approach, take a problem of approximating the kernel matrix defined with $n=10^6$ instances using $m=10^3$ landmarks. Our method for approximate low-rank eigendecomposition requires $3 \times 10^{12}$ less \acro{flop}s than the approach proposed by~\citet{gisbrechtschleif15,SchleifT15,SchleifGT16}.

Beside the considered low-rank approximations, it is possible to treat indefinite similarity functions as features and learn with linear models~\citep{alabdulmohsin14,Chen09} or squared kernel matrices~\citep{GraepelHBO98}. However,~\citet{Balcan2008} have showed that learning with a positive definite kernel corresponding to a feature space where the target concept is separable by a linear hypothesis yields a larger margin compared to learning with a linear model in a feature space constructed using that kernel function. As a result, if a kernel is used to construct a feature representation the sample complexity of a linear model in that space might be higher compared to learning with a kernelized variant of regularized risk minimization.

The effectiveness of a particular landmark selection strategy is a problem studied separately from the derivation of the Nystr\"om method and we, therefore, do not focus on that problem in this work. However, clustering and leverage score sampling have been proposed and validated in earlier publications and are state-of-the-art for low-rank approximation of positive definite kernels~\cite{Kumar12,Alaoui15,Gittens16,OglicG17icml}. As the flip-spectrum matrix shares the eigenspace with the indefinite kernel matrix, the convergence results on the effectiveness of landmark selection strategies for Nystr\"om low-rank approximation of positive definite kernels apply to indefinite kernels~\citep[e.g., see Section~\ref{subsec:landmarks} or][]{eckart1936,mirsky1960}. In particular, bounds for the leverage score sampling strategy applied to the flip-spectrum matrix carry over to our derivation of the Nystr\"om method for indefinite kernels.

We conclude with a reference to~\citet{Schleif18} and~\citet{Loosli16}, where an issue concerning the sparsity of a solution returned by the \krein support vector machine has been raised. We hypothesize that our approach can overcome this limitation by either controlling the approximation rank or penalizing the low-rank objective with the $\ell_1$-norm of the linear model. We leave the theoretical study and evaluation of such an approach for future work.

\nocite{OglicThesis,Schleif17ICVM}
{
\small\selectfont
\textbf{Acknowledgments:} We are grateful for access to the University of Nottingham High Performance Computing Facility. Dino Oglic was supported in part by EPSRC grant EP/R012067/1.
}

{
\small\selectfont
\bibliography{./krein-nystroem}
\bibliographystyle{apalike}
}

\appendix

\section{Proofs}
\label{app:proofs}

\propKreinRegression*

\begin{proof}
The optimal hypothesis over training data satisfies
\begin{align*}
\begin{aligned}
& K\alpha^{*} = && U D U^{\top} U\brackets{DS + n\lambda\mathbb{I}}^{-1} U^{\top} USU^{\top}y = & \\
& && U DS \brackets{DS + n\lambda\mathbb{I}}^{-1} U^{\top} y =  & \\
& && H \brackets{H + n\lambda\mathbb{I}}^{-1} y \ . &
\end{aligned}
\end{align*}

Thus, if we only regularize with $\norm{f}_{\mathcal{H}_{\mathcal{K}}}$ then the \krein kernel ridge regression problem is equivalent to that with the flip-spectrum transformation combined with the corresponding out-of-sample extension. More formally, if $\alpha_{H}^{*}=\brackets{H + n\lambda\mathbb{I}}^{-1} y$ denotes the optimal solution of the kernel ridge regression with the flip-spectrum matrix $H$ in place of the indefinite kernel matrix $K$ then the predictions at out-of-sample test instances are given by 
\begin{align*}
f\brackets{x}=k_{x}^{\top} P \alpha_{H}^{*} \ .
\end{align*}
\end{proof}

\propKreinSVM*

\begin{proof}
	The optimal hypothesis over training instances is
	\begin{align*}
	K\alpha^{*} = KP\alpha_H^{*} = UDU^{\top} USU^{\top} \alpha_H^{*} = H \alpha_H^{*} \ ,
	\end{align*}
	where $\alpha_{H}^{*}$ is the optimal solution for the support vector machine problem with the flip-spectrum matrix $H$ in place of the indefinite \krein kernel matrix $K$. Thus, this variant of \krein support vector machine is equivalent to learning with the flip-spectrum transformation of an indefinite kernel matrix. An out-of-sample extension for a test instance $x$ is
	\begin{align*}
	f\brackets{x}=k_{x}^{\top} \alpha^{*} = k_x^{\top} P \alpha_{H}^{*} \ .
	\end{align*}
\end{proof}

\section{Discussion Addendum}
\label{app:addendum}

We provide here a brief review of the approach by~\citet{gisbrechtschleif15,SchleifT15,SchleifGT16} for approximate eigendecomposition of an indefinite matrix~\footnote{\tiny \url{https://www.techfak.uni-bielefeld.de/~fschleif/eigenvalue\_corrections\_demos.tgz}, accessed in May $2018$}. The approach is motivated by the observation that an indefinite symmetric matrix and its square have identical eigenvectors. For this reason, the authors first form the squared low-rank \krein kernel matrix 
\begin{align*}
\tilde{K}^2 = K_{X \times Z} K_{Z \times Z}^{-1} K_{Z \times X} K_{X \times Z} K_{Z \times Z}^{-1} K_{Z \times X} \ .
\end{align*}
The matrix $A=K_{Z \times Z}^{-1} K_{Z \times X} K_{X \times Z} K_{Z \times Z}^{-1}$ is positive definite because it can be written as $LL^{\top}$ (e.g., taking $L=K_{Z \times Z}^{-1} K_{Z \times X}$). Thus, all the eigenvalues in an eigendecomposition of $A=V \Gamma V^{\top}$ are non-negative and we can set $A=LL^{\top}$ with $L=V \Gamma^{\frac{1}{2}}$. From here it then follows that the matrix $\tilde{K}^2$ can be factored as $\tilde{K}^2 = BB^{\top}$ with $B=K_{X \times Z} V \Gamma^{\frac{1}{2}}$. Following this,~\citet{gisbrechtschleif15,SchleifT15,SchleifGT16} mimic the standard procedure for the derivation of approximate eigenvectors and eigenvalues characteristic to the Nystr\"om method for positive definite kernels~\citep{Williams01,Fowlkes04,Drineas05,Drineas06}. In particular, they first decompose the positive definite matrix $B^{\top}B=Q \Delta Q^{\top}$ and then compute the approximate eigenvectors of $\tilde{K}^2$ as $\tilde{U}=B Q \Delta^{-\frac{1}{2}}$. Now, to obtain an approximate eigendecomposition of the \krein kernel matrix the authors use these eigenvectors in combination with the posited form of the low-rank approximation $\tilde{K}=K_{X \times Z} K_{Z \times Z}^{-1} K_{Z \times X}$ and compute the approximate eigenvalues as $ \tilde{D} = \tilde{U}^{\top} \tilde{K} \tilde{U}$. As the diagonal matrix $\Delta$ contains the eigenvalues of $\tilde{K}^2$ this step retrieves the signed eigenvalues of $\tilde{K}$. The \emph{one-shot} Nystr\"om approximation of the kernel matrix is then given as~\citep{gisbrechtschleif15,SchleifT15,SchleifGT16}
\begin{align*}
\begin{aligned}
& K_{X \mid Z}^{\textsc{sgt}^*} = \tilde{U} \tilde{D} \tilde{U}^{\top} = & \\
& K_{X \times Z} V \Gamma^{\frac{1}{2}} Q \Delta^{-\frac{1}{2}} \ \tilde{D} \ \Delta^{-\frac{1}{2}} Q^{\top} \Gamma^{\frac{1}{2}} V^{\top} K_{Z \times X} \ . &
\end{aligned}
\end{align*}

\clearpage

\onecolumn

\section{Additional Experiments}
\label{app:plots}

\begin{figure*}[!htb]
	\centering
	\input{./simple-sigmoid-rank-vs-error.tikz}
	\caption{The figure shows the reduction in the approximation error for an indefinite kernel matrix obtained using the \acro{Sigmoid} kernel~\citep[][Appendix D]{OglicG18icml}, which comes as a result of the increase in the approximation rank. In the figure legend, we use $\brackets{k \mid l}$ to express the fact that a rank $k$ approximation of the kernel matrix is computed using a set of $l$ landmarks.}
	\label{fig:simple-sigmoid-rank-vs-error}
\end{figure*}

\vfill

\begin{figure*}[!htb]
	\centering
	\input{./sigmoid-rank-vs-error.tikz}
	\caption{The figure shows the reduction in the approximation error for an indefinite kernel matrix obtained using the \acro{RL-Sigmoid} kernel~\citep[][Appendix D]{OglicG18icml}, which comes as a result of the increase in the approximation rank. In the figure legend, we use $\brackets{k \mid l}$ to express the fact that a rank $k$ approximation of the kernel matrix is computed using a set of $l$ landmarks.}
	\label{fig:sigmoid-rank-vs-error}
\end{figure*}

\vfill

\begin{figure*}[!htb]
	\centering
	\input{./epanechnikov-rank-vs-error.tikz}
	\caption{The figure shows the reduction in the approximation error for an indefinite kernel matrix obtained using the \acro{Epanechnikov} kernel~\citep[][Appendix D]{OglicG18icml}, which comes as a result of the increase in the approximation rank. In the figure legend, we use $\brackets{k \mid l}$ to express the fact that a rank $k$ approximation of the kernel matrix is computed using a set of $l$ landmarks.}
	\label{fig:epanechnikov-rank-vs-error}
\end{figure*}

\clearpage

\begin{figure*}[!htb]
	\centering
	\begin{tikzpicture}[font=\tiny]
\hypersetup{hidelinks}
\begin{groupplot}[group style={group size=5 by 1, horizontal sep=2.3em}]

\nextgroupplot[
title={(a) zongker, $\iota=59\%$},
height=2.2cm,
width=2.5cm,
scale only axis,
axis line style={-},
tick label style={font=\tiny},
label style={font=\tiny},
legend style={font=\tiny, at={(3.0, 1.4)}, anchor=south, draw=none, legend cell align=left, legend columns=4, column sep=2em},
legend image post style={scale=1.2},
xmin=1.5, xmax=4.7,
ymin=14, ymax=46,
axis on top,
xtick={1.5,  2.3,  3.1,  3.9,  4.7},
xticklabels={5, 10,  25,  50,  100},
ytick={ 14.,  22.,  30.,  38.,  46},
yticklabels={ 14,  22,  30,  38,  46},
tick align=outside,
xlabel={rank (log scale)},
ylabel={classification error (\%)}
]

\addplot [c-km, solid, line width=0.5pt, mark=*, mark size=1pt, mark repeat={1}, mark phase=1]
coordinates {
	( 1.60943791243 , 44.5 )
	( 2.30258509299 , 34.25 )
	( 3.21887582487 , 29.75 )
	( 3.91202300543 , 19.5 )
	( 4.60517018599 , 16.0 )
};
\addlegendentry{\acro{sf-lsm}};

\addplot [c-lss-1, solid, line width=0.5pt, mark=square*, mark size=1pt, mark repeat={1}, mark phase=1]
coordinates {
	( 1.60943791243 , 36.25 )
	( 2.30258509299 , 35.5 )
	( 3.21887582487 , 30.0 )
	( 3.91202300543 , 21.5 )
	( 4.60517018599 , 18.25 )
};
\addlegendentry{\krein \acro{lsm}};

\addplot [c-dpp-1, solid, line width=0.5pt, mark=diamond*, mark size=1.5pt, mark repeat={1}, mark phase=1]
coordinates {
	( 1.60943791243 , 36.25 )
	( 2.30258509299 , 35.75 )
	( 3.21887582487 , 30.75 )
	( 3.91202300543 , 21.5 )
	( 4.60517018599 , 17.75 )
};
\addlegendentry{\krein \acro{vc-lsm}};

\addplot [lloyd-km, solid, line width=0.5pt, mark=triangle*, mark size=1.5pt, mark repeat={1}, mark phase=1]
coordinates {
( 1.6094379124341003 , 39.75 )
( 2.302585092994046 , 36.0 )
( 3.2188758248682006 , 27.500000000000004 )
( 3.912023005428146 , 20.75 )
( 4.605170185988092 , 16.0 )
};
\addlegendentry{\krein \acro{sh-svm}};

\nextgroupplot[
title={(b) chicken25-45, $\iota=32\%$},
height=2.2cm,
width=2.5cm,
scale only axis,
axis line style={-},
tick label style={font=\tiny},
label style={font=\tiny},
legend style={font=\tiny},
xmin=1.5, xmax=4.7,
ymin=15, ymax=39,
axis on top,
xtick={1.5,  2.3,  3.1,  3.9,  4.7},
xticklabels={5, 10,  25,  50,  100},
ytick={  15.,  21.,  27.,  33.,  39},
yticklabels={    15,  21,  27,  33,  39},
tick align=outside,
xlabel={rank (log scale	)}
]

\addplot [c-km, solid, line width=0.5pt, mark=*, mark size=1.pt, mark repeat={1}, mark phase=1]
coordinates {
( 1.60943791243 , 27.2727272727 )
( 2.30258509299 , 31.4393939394 )
( 3.21887582487 , 26.6666666667 )
( 3.91202300543 , 21.3636363636 )
( 4.60517018599 , 16.8686868687 )
};

\addplot [c-lss-1, solid, line width=0.5pt, mark=square*, mark size=1pt, mark repeat={1}, mark phase=1]
coordinates {
( 1.60943791243 , 34.8232323232 )
( 2.30258509299 , 32.601010101 )
( 3.21887582487 , 23.6111111111 )
( 3.91202300543 , 20.4545454545 )
( 4.60517018599 , 20.0 )
};

\addplot [c-dpp-1, solid, line width=0.5pt, mark=diamond*, mark size=1.5pt, mark repeat={1}, mark phase=1]
coordinates {
( 1.60943791243 , 34.8232323232 )
( 2.30258509299 , 31.4646464646 )
( 3.21887582487 , 22.4747474747 )
( 3.91202300543 , 23.3333333333 )
( 4.60517018599 , 17.9797979798 )
};

\addplot [lloyd-km, solid, line width=0.5pt, mark=triangle*, mark size=1.5pt, mark repeat={1}, mark phase=1]
coordinates {
( 1.6094379124341003 , 39.0 )
( 2.302585092994046 , 27.2727272727 )
( 3.2188758248682006 , 23.5858585859 )
( 3.912023005428146 , 25.5555555556 )
( 4.605170185988092 , 22.5555555556 )
};

\nextgroupplot[
title={(c) polydish57, $\iota=42\%$},
height=2.2cm,
width=2.5cm,
scale only axis,
axis line style={-},
tick label style={font=\tiny},
label style={font=\tiny},
legend style={font=\tiny},
xmin=1.5, xmax=4.7,
ymin=2, ymax=42,
axis on top,
xtick={1.5,  2.3,  3.1,  3.9,  4.7},
xticklabels={5, 10,  25,  50,  100},
ytick={  2.,  12.,  22.,  32.,  42.},
yticklabels={  2,  12,  22,  32,  42},
tick align=outside,
xlabel={rank (log scale)}
]
     
\addplot [c-km, solid, line width=0.5pt, mark=*, mark size=1pt, mark repeat={1}, mark phase=1]
coordinates {
( 1.60943791243 , 40.25 )
( 2.30258509299 , 28.75 )
( 3.21887582487 , 12.25 )
( 3.91202300543 , 7.375 )
( 4.60517018599 , 2.875 )
};

\addplot [c-lss-1, solid, line width=0.5pt, mark=square*, mark size=1pt, mark repeat={1}, mark phase=1]
coordinates {
( 1.60943791243 , 34.125 )
( 2.30258509299 , 26.75 )
( 3.21887582487 , 18.5 )
( 3.91202300543 , 5.75 )
( 4.60517018599 , 3.5 )
};

\addplot [c-dpp-1, solid, line width=0.5pt, mark=diamond*, mark size=1pt, mark repeat={1}, mark phase=1]
coordinates {
( 1.60943791243 , 34.25 )
( 2.30258509299 , 26.75 )
( 3.21887582487 , 18.5 )
( 3.91202300543 , 5.75 )
( 4.60517018599 , 3.5 )
};

\addplot [lloyd-km, solid, line width=0.5pt, mark=triangle*, mark size=1.5pt, mark repeat={1}, mark phase=1]
coordinates {
( 1.6094379124341003 , 30.125 )
( 2.302585092994046 , 27.875 )
( 3.2188758248682006 , 10.5 )
( 3.912023005428146 , 5.625 )
( 4.605170185988092 , 2.75 )
};

\nextgroupplot[
title={(d) polydism57, $\iota=36\%$},
height=2.2cm,
width=2.5cm,
scale only axis,
axis line style={-},
tick label style={font=\tiny},
label style={font=\tiny},
legend style={font=\tiny},
xmin=1.5, xmax=4.7,
ymin=0, ymax=36,
axis on top,
xtick={1.5,  2.3,  3.1,  3.9,  4.7},
xticklabels={5, 10,  25,  50,  100},
ytick={  0.,   9.,  18.,  27.,  36.},
yticklabels={    0,   9,  18,  27,  36},
tick align=outside,
xlabel={rank (log scale	)}
]

\addplot [c-km, solid, line width=0.5pt, mark=*, mark size=1.pt, mark repeat={1}, mark phase=1]
coordinates {
	( 1.60943791243 , 25.625 )
	( 2.30258509299 , 7.25 )
	( 3.21887582487 , 1.375 )
	( 3.91202300543 , 0.625 )
	( 4.60517018599 , 0.625 )
};

\addplot [c-lss-1, solid, line width=0.5pt, mark=square*, mark size=1pt, mark repeat={1}, mark phase=1]
coordinates {
	( 1.60943791243 , 33.375 )
	( 2.30258509299 , 8.625 )
	( 3.21887582487 , 2.75 )
	( 3.91202300543 , 0.75 )
	( 4.60517018599 , 0.375 )
};

\addplot [c-dpp-1, solid, line width=0.5pt, mark=diamond*, mark size=1.5pt, mark repeat={1}, mark phase=1]
coordinates {
	( 1.60943791243 , 33.375 )
	( 2.30258509299 , 8.5 )
	( 3.21887582487 , 2.75 )
	( 3.91202300543 , 0.75 )
	( 4.60517018599 , 0.375 )
};

\addplot [lloyd-km, solid, line width=0.5pt, mark=triangle*, mark size=1.5pt, mark repeat={1}, mark phase=1]
coordinates {
( 1.6094379124341003 , 20.625 )
( 2.302585092994046 , 19.625 )
( 3.2188758248682006 , 1.7500000000000002 )
( 3.912023005428146 , 0.75 )
( 4.605170185988092 , 0.25 )
};

\nextgroupplot[
title={(e) woodyplants50, $\iota=23\%$},
height=2.2cm,
width=2.5cm,
scale only axis,
axis line style={-},
tick label style={font=\tiny},
label style={font=\tiny},
legend style={font=\tiny},
xmin=1.5, xmax=4.7,
ymin=25, ymax=45,
axis on top,
xtick={1.5,  2.3,  3.1,  3.9,  4.7},
xticklabels={5, 10,  25,  50,  100},
ytick={  25.,  30.,  35.,  40.,  45.},
yticklabels={  25,  30,  35,  40,  45},
tick align=outside,
xlabel={rank (log scale)}
]

\addplot [c-km, solid, line width=0.5pt, mark=*, mark size=1pt, mark repeat={1}, mark phase=1]
coordinates {
( 1.60943791243 , 44.375 )
( 2.30258509299 , 32.0512820513 )
( 3.21887582487 , 36.4715189873 )
( 3.91202300543 , 30.8227848101 )
( 4.60517018599 , 26.5705128205 )
};

\addplot [c-lss-1, solid, line width=0.5pt, mark=square*, mark size=1pt, mark repeat={1}, mark phase=1]
coordinates {
( 1.60943791243 , 43.0379746835 )
( 2.30258509299 , 36.7147435897 )
( 3.21887582487 , 32.9113924051 )
( 3.91202300543 , 28.75 )
( 4.60517018599 , 30.1819620253 )
};
      
\addplot [c-dpp-1, solid, line width=0.5pt, mark=diamond*, mark size=1.5pt, mark repeat={1}, mark phase=1]
coordinates {
( 1.60943791243 , 43.3939873418 )
( 2.30258509299 , 35.0373255437 )
( 3.21887582487 , 32.2784810127 )
( 3.91202300543 , 27.2115384615 )
( 4.60517018599 , 30.8148734177 )
};

\addplot [lloyd-km, solid, line width=0.5pt, mark=triangle*, mark size=1.5pt, mark repeat={1}, mark phase=1]
coordinates {
( 1.6094379124341003 , 41.4070107108 )
( 2.302585092994046 , 37.3397435897 )
( 3.2188758248682006 , 37.9746835443 )
( 3.912023005428146 , 38.6075949367 )
( 4.605170185988092 , 37.2628205128 )
};

\end{groupplot}
\end{tikzpicture}
	\caption{The figure shows the reduction in the classification error as the approximation rank of a Nystr\"om low-rank approximation increases. The reported error is the median classification error obtained using $10$-fold stratified cross-validation.}
	\label{fig:app-dissim-rank-vs-error}
\end{figure*}

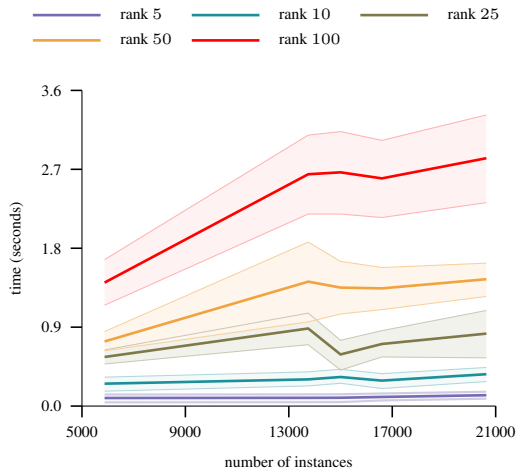
\begin{figure*}[!htb]
	\centering
	\begin{tikzpicture}[font=\tiny]
\hypersetup{hidelinks}
\begin{groupplot}[group style={group size=1 by 1, horizontal sep=2.4em}]

\nextgroupplot[
title={},
height=4.2cm,
width=5.5cm,
scale only axis,
axis line style={-},
tick label style={font=\tiny},
label style={font=\tiny},
legend style={font=\tiny, at={(0.45, 1.1)}, anchor=south, draw=none, legend cell align=left, legend columns=3, column sep=.5em},
legend image post style={scale=1.5},
xmin=5000, xmax=21000,
ymin=0, ymax=3.6,
axis on top,
xtick={5000.,  9000., 13000., 17000., 21000.},
xticklabels={5000,  9000, 13000, 17000, 21000},
ytick={0. , 0.9, 1.8, 2.7, 3.6},
yticklabels={0.0 , 0.9, 1.8, 2.7, 3.6},
tick align=outside,
xlabel={number of instances},
ylabel={time (seconds)},
scaled x ticks=false
]


\addplot[c-km, line width=1]
coordinates {
(5875, 0.09194194602966309)
(13750, 0.09394194602966309)
(15000, 0.09494194602966309 )
(16599, 0.10394194602966309)
(20640, 0.12394194602966309)
};
\addlegendentry{rank $5$};
     
\addplot[c-lss-1, line width=1]
coordinates {
(5875, 0.2554636478424072)
(13750, 0.30443606376647947)
(15000, 0.33133299350738527 )
(16599, 0.29033443927764893)
(20640, 0.36328110694885254)
};
\addlegendentry{rank $10$};

\addplot[c-dpp-1, line width=1]
coordinates {
(5875,0.5598509550094605)
(13750, 0.8847391366958618)
(15000, 0.5890582799911499)
(16599, 0.7080075979232788)
(20640, 0.8262929201126098)
};
\addlegendentry{rank $25$};

\addplot[lloyd-km, line width=1]
coordinates {
	(5875, 0.7366926908493042)
	(13750,  1.4188207149505616)
	(15000, 1.3512642145156861 )
	(16599, 1.342190647125244)
	(20640, 1.4466442108154297)
};
\addlegendentry{rank $50$};

\addplot[red, line width=1]
coordinates {
	(5875, 1.4081483840942384)
	(13750, 2.6438645362854003)
	(15000, 2.6656078338623046 )
	(16599, 2.5966380834579468)
	(20640, 2.82677686214447)
};
\addlegendentry{rank $100$};

\addplot[name path=c-km-lower-bound, draw=c-km!35, mark=none]
coordinates {
(5875, 0.04194194602966309)
(13750, 0.04394194602966309)
(15000, 0.04494194602966309 )
(16599, 0.06394194602966309)
(20640, 0.08394194602966309)
};

\addplot[name path=c-km-upper-bound, draw=c-km!35, mark=none]
coordinates {
(5875, 0.13194194602966309)
(13750, 0.13394194602966309)
(15000, 0.13494194602966309 )
(16599, 0.14394194602966309)
(20640, 0.16394194602966309)
};

\addplot [color=c-km, fill=c-km!5] fill between[of=c-km-lower-bound and c-km-upper-bound];

\addplot[name path=c-lss-1-lower-bound, draw=c-lss-1!35, mark=none]
coordinates {
	(5875, 0.17)
	(13750, 0.23)
	(15000, 0.26 )
	(16599, 0.20)
	(20640, 0.28)
};

\addplot[name path=c-lss-1-upper-bound, draw=c-lss-1!35, mark=none]
coordinates {
	(5875, 0.33)
	(13750, 0.39)
	(15000, 0.42 )
	(16599, 0.37)
	(20640, 0.44)
};

\addplot [color=c-lss-1, fill=c-lss-1!5] fill between[of=c-lss-1-lower-bound and c-lss-1-upper-bound];

\addplot[name path=c-dpp-1-lower-bound, draw=c-dpp-1!40, mark=none]
coordinates {
	(5875, 0.48)
	(13750, 0.70)
	(15000, 0.41 )
	(16599, 0.56)
	(20640, 0.55)
};

\addplot[name path=c-dpp-1-upper-bound, draw=c-dpp-1!40, mark=none]
coordinates {
	(5875, 0.64)
	(13750, 1.06)
	(15000, 0.75 )
	(16599, 0.86)
	(20640, 1.09)
};

\addplot [color=c-dpp-1, fill=c-dpp-1!10] fill between[of=c-dpp-1-lower-bound and c-dpp-1-upper-bound];

\addplot[name path=lloyd-km-lower-bound, draw=lloyd-km!50, mark=none]
coordinates {
(5875, 0.63)
(13750, 0.96)
(15000, 1.05 )
(16599, 1.10)
(20640, 1.25)
};

\addplot[name path=lloyd-km-upper-bound, draw=lloyd-km!50, mark=none]
coordinates {
(5875, 0.85)
(13750, 1.87)
(15000, 1.65 )
(16599, 1.58)
(20640, 1.63)
};

\addplot [color=lloyd-km, fill=lloyd-km!10] fill between[of=lloyd-km-lower-bound and lloyd-km-upper-bound];

\addplot[name path=red-lower-bound, draw=red!30, mark=none]
coordinates {
(5875, 1.15)
(13750, 2.19)
(15000, 2.19 )
(16599, 2.15)
(20640, 2.32)
};

\addplot[name path=red-upper-bound, draw=red!30, mark=none]
coordinates {
(5875, 1.67)
(13750, 3.09)
(15000, 3.13 )
(16599, 3.03)
(20640, 3.32)
};

\addplot [color=red, fill=red!5] fill between[of=red-lower-bound and red-upper-bound];

\end{groupplot}
\end{tikzpicture}
	\caption{The figure depicts the computational cost for Nystr\"om approximations of different ranks as a function of the number of instances in a dataset. The reported time is the average time required to compute the Nystr\"om approximation of a given rank (averaged over $10$ repetitions of the experiment). The confidence interval for a \emph{cost-curve} is computed by subtracting/adding the corresponding standard deviations from the average computational costs. The landmarks are selected using the approximate kernel $K$-means$++$ sampling strategy proposed in Section~\ref{subsec:landmarks}. For all the considered approximation ranks, the \emph{cost-curves} indicate that the approach scales (approximately) linearly with respect to the dataset size (the slope of a \emph{cost-curve} depends on the approximation rank).}
	\label{fig:app-size-vs-time}
\end{figure*}

\begin{table*}[!htb]
	\centering
	\input{./rank-100.table}
	\caption{The table reports the results of our experiments on benchmark datasets for learning with indefinite kernels~\citep{Pekalska05}. The effectiveness of an approach is measured using the average classification error obtained via $10$-fold stratified cross-validation. In contrast to the experiments over the whole \krein space~\citep{OglicG18icml}, the hyper-parameter optimization for low-rank approaches did not involve any random restarts (which could further improve the reported results for considered \krein methods).}
	\label{tbl:complete-app}
\end{table*}

\end{document}